\pgfplotsset{compat=1.17}
\crefname{equation}{Eq.}{Eqs.}
\theoremstyle{plain}
\newtheorem{theorem}{Theorem}[section]
\newtheorem{proposition}[theorem]{Proposition}
\newtheorem{lemma}[theorem]{Lemma}
\theoremstyle{definition}
\theoremstyle{remark}
\DeclarePairedDelimiter{\paren}{\lparen}{\rparen}
\DeclarePairedDelimiter{\sqbra}{\lbrack}{\rbrack}
\DeclarePairedDelimiter{\bra}{\lbrace}{\rbrace}
\DeclarePairedDelimiter{\norm}{\lVert}{\rVert}
\DeclarePairedDelimiterX{\pgiven}[2]{\lparen}{\rparen}{#1\,\delimsize\vert\,#2}
\DeclarePairedDelimiterX{\qgiven}[2]{\lbrack}{\rbrack}{#1\,\delimsize\vert\,#2}
\DeclarePairedDelimiterX{\inner}[2]{\lparen}{\rparen}{#1\cdot#2}
\newcommand{\diag}[1]{\mathrm{diag}\paren*{#1}}
\newcommand{\tr}[1]{\mathrm{tr}\paren*{#1}}
\newcommand{\rank}[1]{\mathrm{rank}\paren*{#1}}
\newcommand{\Relu}[1]{\mathrm{ReLU}\paren*{#1}}
\newcommand{\st}{\mathrm{s.t.}}
\newcommand{\argmax}{\mathop{\rm arg~max}\limits}
\newcommand{\LeakyRelu}[1]{\mathrm{LeakyReLU}\paren*{#1}}
\theoremstyle{plain}
\newtheorem{problem}[theorem]{Problem}
\title{Interior-Point Vanishing Problem in Semidefinite Relaxations for Neural Network Verification}
\author{
    \begin{tabular}{cc}
        \begin{tabular}[t]{c} 
            Ryota Ueda \\
            \rm{Institute of Science Tokyo} \\
            \texttt{ueda.r.sss@gmail.com}
        \end{tabular}
        &
        \begin{tabular}[t]{c} 
            Takami Sato \\
            \rm{Keio University} \\
            \texttt{takami.sato@yos.elec.keio.ac.jp}
        \end{tabular}
        \\ \\ 
        \begin{tabular}[t]{c} 
            Ken Kobayashi \\
            \rm{Institute of Science Tokyo} \\
            \texttt{kobayashi.k@iee.eng.isct.ac.jp}
        \end{tabular}
        &
        \begin{tabular}[t]{c} 
            Kazuhide Nakata \\
            \rm{Institute of Science Tokyo} \\
            \texttt{nakata.k.ac@m.titech.ac.jp}
        \end{tabular}
    \end{tabular}
}
\date{}
\begin{document}
\maketitle

\begin{abstract}

Semidefinite programming (SDP) relaxation has emerged as a promising approach for neural network verification, offering tighter bounds than other convex relaxation methods for deep neural networks (DNNs) with ReLU activations. However, we identify a critical limitation in the SDP relaxation when applied to deep networks: interior-point vanishing, which leads to the loss of strict feasibility -- a crucial condition for the numerical stability and optimality of SDP. 
 Through rigorous theoretical and empirical analysis, we demonstrate that as the depth of DNNs increases, the strict feasibility is likely to be lost, creating a fundamental barrier to scaling SDP-based verification. To address the interior-point vanishing,  we design and investigate five solutions to enhance the feasibility conditions of the verification problem. Our methods can successfully solve 88\% of the problems that could not be solved by existing methods, accounting for 41\% of the total. Our analysis also reveals that the valid constraints for the lower and upper bounds for each ReLU unit are traditionally inherited from prior work without solid reasons, but are actually not only unbeneficial but also even harmful to the problem's feasibility.
This work provides valuable insights into the fundamental challenges of SDP-based DNN verification and offers practical solutions to improve its applicability to deeper neural networks, contributing to the development of more reliable and secure systems with DNNs.

\end{abstract}

\section{Introduction}
\label{sec:intro}

Deep Neural Networks (DNNs) have achieved remarkable success across various domains, including image recognition~\cite{krizhevsky2012imagenet}, natural language processing~\cite{bahdanau2014neural}, and autonomous systems~\cite{chen2024end}. 
However, their vulnerability against adversarial perturbations, known as adversarial attacks~\cite{Szegedy2014, goodfellow2014explaining}, has raised critical concerns about their reliability and security~\cite{li2023sok} and spurred extensive research efforts to enhance the robustness of DNN models such as robustified training (e.g., adversarial training~\cite{madry2017towards}) and certified defenses~\cite{wong2018provable,cohen2019certified}. 
Although these defense or mitigation methods have gained traction, they can so far only offer empirical or probabilistic guarantees and thus do not provide definitive guarantees about a model's behavior under adversarial conditions. To address the limitation, DNN verification~\cite{huang2017safety, cheng2017maximum, katz2017reluplex} has emerged as a critical area of research, aiming to ensure that DNN models behave reliably under all possible scenarios within assumed conditions, e.g., verifying whether a classification model maintains its predicted label for any input perturbation within a given norm ball.

Due to the high complexity of DNNs, complete (i.e., exact) verification methods~\cite{anderson2020strong,bastani2016measuring,botoeva2020efficient,ehlers2017formal,katz2022reluplex,lomuscio2017approach,tjeng2017evaluating} are often NP-hard and therefore too expensive in practice. Consequently, incomplete verification~\cite{henriksen2020efficient,singh2019abstract,weng2018towards,wang2021beta} has gained attention, which is a method to verify a relaxed version of the complete DNN verification problem since a DNN is always verified if its relaxed verification problem is verified.  Among the various approaches to relaxing the complete verification, semidefinite programming (SDP) relaxation~\cite{SDP-IP, batten2021efficient} has shown high performance for verifying DNNs with rectified linear unit (ReLU) activation units, as it is known to give the tightest relaxation among convex relaxation approaches~\cite{ehlers2017formal}. The current state-of-the-art incomplete approaches~\cite{xu2020fast, wang2021beta, zhou2024scalable} refine the solutions with branch-and-bound (BaB) strategies, e.g., exploring all combinations of positive and negative cases of each ReLU activation~\cite{bunel2018unified, wang2021beta}.  While BaB brings performance improvements in practice,  the quality of the initial solution obtained by the convex relaxation is still essential, as BaB is applied on top of it.

While SDP relaxation is recognized as a method that can give one of the tightest relaxations~\cite{chiu2023tight, batten2021efficient}, SDP-based verification methods are not commonly used in state-of-the-art verification methods~\cite{brix2024fifth}. Their high computational cost is often mentioned as a reason for this~\cite{li2023sok}. However, in this study, we identify a more fundamental reason that challenges the use of SDP in DNN verification: interior-point vanishing, a term we define to describe the phenomenon that SDP-based verification is unlikely to have feasible interior points (i.e., strict feasibility) when the depth of the DNN being verified increases. 
Strict feasibility is a crucial condition that ensures numerical stability and optimality in SDP solutions~\cite{sekiguchi2021perturbation}. The interior-point vanishing could be the actual reason hindering the development of the SDP-based approach in this area.

\noindent\textbf{Contributions:} This study is the first to identify the interior-point vanishing problem in SDP-based DNN verification and evaluate the impacts. Prior studies~\cite{batten2021efficient, lan2023semidefinite} did not identify this problem because they did not evaluate models as deep as practical DNNs to observe the impact.
Through our theoretical and empirical analysis, we reveal that as the depth of a network grows, the likelihood of encountering strict feasibility issues increases, posing a significant limitation to the applicability of SDP-based verification methods in practical scenarios.
To address the challenges, we propose five newly designed approaches to enhance the feasibility conditions of the verification problem. These methods aim to mitigate numerical instability and improve the solvability of the SDP. 
Our experimental results demonstrate that our methods can successfully solve 88\% of the problems that could not be solved by existing methods, accounting for 41\% of the total.
We further find that the valid constraints for the lower and upper bounds for each ReLU unit are traditionally inherited from prior work~\cite{batten2021efficient, SDP-IP} without solid reasons, but are actually not only unbeneficial but also harmful to the problem's feasibility.

\noindent\textbf{Notation:}
Let $\mathbb R^n$ and $\mathcal S^n$ denote the $n$-dimensional Euclidean space and the space of $n\times n$ symmetric matrices, respectively. 
We express a symmetric matrix $\bm X\in \mathcal S^n$ as $\bm X\succeq \bm O$ and $\bm X\succ \bm 0$ if $\bm X$ is positive semidefinite and positive definite, respectively.
For a vector $\bm x\in \mathbb R^n$, its $i$-th element is denoted by $\paren{\bm x}_i$. 
For a matrix $\bm X\in \mathbb R^{m\times n}$, the $\paren{i,j}$ element of $\bm X$ is denoted by $\paren{\bm X}_{ij}$, and the $i$-th row of $\bm X$ is denoted by $\bm X\paren{i,\colon}$. 
The $\ell_2$ and $\ell_\infty$ norms of a vector $\bm x$ are denoted by $\norm{\bm x}_2$ and $\norm{\bm x}_\infty$, respectively, and the Frobenius norm of a matrix $\bm X$ is denoted by $\norm{\bm X}_F = \sqrt{\tr{\bm X^\top \bm X}}$. 
For two vectors $\bm x, \bm y\in \mathbb R^n$, the Hadamard product, which is 
element-wise multiplication, is denoted by $\bm x\odot \bm y$. 
The map $\diag{\cdot} \colon \mathbb R^{n\times n}\to \mathbb R^n$ represents the operator that arranges the diagonal elements of a matrix into a vector.
For a nonnegative integer $L$, we define $\sqbra{L}\coloneqq \{0, 1, \dots, L-1\}$. 

\section{Background}
\label{sec:background}

\subsection{DNN Verification Problem}

Consider a DNN model $\bm f\colon \mathbb{R}^d \rightarrow \mathbb{R}^m$ for $m$-class classification. 
Let $\bar{\bm x}$ be an input instance and $i^\star$ be its prediction label provided by $\bm f$. 
We define the DNN verification problem as the following decision problem:

\begin{problem}[DNN Verification] \label{prob:verification}
    Given a DNN model $\bm f\colon \mathbb{R}^d \rightarrow \mathbb{R}^m$, an input instance $\bar{\bm x}$ with the prediction label $i^\star$, and a perturbation radius $\rho>0$, determine whether for all  $\bm x_0$ satisfying $\norm{\bm x_0 - \bar{\bm x}}_{\infty} \leq \rho$, the prediction label remains unchanged, i.e., $\argmax_{i=1,\dots,m}~\paren{\bm f\paren{\bm x_0}}_i =i^\star$.
\end{problem}

This problem determines whether the prediction label of the input $\bar{\bm x}$ can be changed to another label $i\neq i^\star$ by perturbing the input within the radius $\rho>0$. If \Cref{prob:verification} is true, the prediction label cannot change to another label $i\neq i^\star$ by perturbing the original input $\bar{\bm x}$ within the radius $\rho$, meaning the DNN model $\bm f$ is \emph{robust} for  $\bar{\bm x}$ with a perturbation radius $\rho$.  
In this work, we focus on $L$-layer feed-forward DNN $\bm f\colon \mathbb R^{d}\to \mathbb R^m$. For an input $\bar{\bm x}$, its output is defined as $\bm f\paren{\bar{\bm x}}\coloneqq \bm W_L\bm x_L + \bm b_L$, where $\bm x_{i+1} = \Relu{\bm W_i \bm x_{i} + \bm b_i}$ for $i \in \sqbra{L}$, and $\bm x_0 = \bar{\bm x}$, where $\bm W_i \in \mathbb R^{n_{i+1} \times n_i}$ and $\bm b_i \in \mathbb R^{n_{i+1}}$ are the weight matrix and bias vector of the $i$-th layer, respectively. 
$\Relu{\cdot}$ is a map that applies the ReLU function to each element of a vector. The predicted label for an input $\bar{\bm x}$ is determined as $i^\star = \argmax_{i=1,\dots, m}~\paren{\bm f\paren{\bar{\bm x}}}_i$.

For a DNN with ReLU activation, \Cref{prob:verification} can be equivalent to solve the following optimization problem:
\begin{subequations} \label{eq:main_problem}
\begin{alignat}{3}
\gamma^{\star}\coloneqq  \min_{\bra{\bm x_i}}&\quad \bm c^{\top} \bm x_L + c_0  \\
\st
&\quad \bm x_{i+1} = \operatorname{ReLU}\paren{\bm W_i \bm x_i + \bm b_i} &\quad \paren{i \in \sqbra{L}}, \label{eq:relu_constraint} \\
&\quad\norm{\bm x_0 - \bar{\bm x}}_{\infty} \leq \rho, \label{eq:perturbation} \\
&\quad\bm l_{i+1} \leq \bm x_{i+1} \leq \bm u_{i+1} &\quad \paren{ i \in \sqbra{L}},
\label{eq:layer} 
\end{alignat}
\end{subequations}
where $\bm c\coloneq \paren{\bm W_L\paren{i^\star, \colon}-\bm W_L\paren{i,\colon}}^\top $ and $c_0\coloneq \paren{\bm b_L}_{i^\star}-\paren{\bm b_L}_i$. 
The vectors $\bm l_{i+1}$ and $\bm u_{i+1}$ represent the lower and upper bounds after activation, respectively. We obtained the lower and upper bounds using a boundary propagation method \cite{wang2018efficient,henriksen2020efficient}. 
\Cref{prob:verification} is true if and only if the optimal value $\gamma^{\star}$ is positive. 
However, due to the constraint~\eqref{eq:relu_constraint}, this optimization problem is nonconvex and thus generally hard to solve in a practical time.

Instead of directly solving this complete verification problem, we can relax it to an incomplete convex relaxation problem by outer-approximating the feasible region of the problem~\eqref{eq:main_problem} with a convex set $\mathcal{D}$:
\begin{align*}
\gamma_{\mathcal{D}}\coloneqq \min_{\bra{\bm x_i}}&\quad \bm c^{\top} \bm x_L+c_0\\
    \st&\quad \paren{\bm x_0, \bm x_1, \ldots, \bm x_L} \in \mathcal{D}, 
\end{align*}
where $\gamma^{\star} \geq \gamma_{\mathcal{D}}$ holds. 
When $\gamma_{\mathcal{D}}>0$, the answer to \Cref{prob:verification} is true; Otherwise (i.e., when $\gamma^{\star}>0 \geq \gamma_{\mathcal{D}}$), the verification is undetermined.

\subsection{Semidefinite Relaxation} \label{sec:sdp_relax}

SDP relaxation~\cite{SDP-IP, batten2021efficient, chiu2023tight} is one of the outer-approximation methods for the nonconvex optimization problem~\eqref{eq:main_problem}, and recognized as a method that can give one of the tightest relaxations. Starting from the problem~\eqref{eq:main_problem}, we can obtain the SDP-relaxed problem as the following procedure: First, we convert the problem~\eqref{eq:main_problem} into an equivalent quadratically constrained quadratic program (QCQP). The ReLU constraints~\eqref{eq:relu_constraint} can be equivalently replaced with the following linear and quadratic constraints:
\begin{subequations}\label{eq:relu_qcqp}
\begin{alignat}{3}
 &\bm x_{i+1} \geq \bm 0 &\quad     \paren{i \in \sqbra{L}}, \\ 
 &\bm x_{i+1} \geq \bm W_i \bm x_i+\bm b_i&\quad  \paren{i \in \sqbra{L}}, \\
  &\bm x_{i+1} \odot \paren{\bm x_{i+1}-\bm W_i \bm x_i-\bm b_i} = 0 &\quad \paren{i \in \sqbra{L}}.   
\end{alignat}    
\end{subequations}

The input and activation constraints~\eqref{eq:perturbation} and \eqref{eq:layer} can be also replaced with the following quadratic ones:
\begin{equation} \label{eq:qcqp}
    \bm x_i \odot \bm x_i - \paren{\bm l_i + \bm u_i} \odot \bm x_i + \bm l_i \odot \bm u_i \leq 0 
    \quad \paren{i \in [L+1]}.
\end{equation}
We note that $i=0$ corresponds to the input layer. \Cref{eq:qcqp} for $i=0$ describes the perturbation bound on the input $\bar{\bm x}$, i.e., $\bm l_0 = \bar{\bm x} - \rho \bm {1}_{n_0}$ and $\bm u_0 = \bar{\bm x} + \rho \bm 1_{n_0}$ for the input layer, \Cref{eq:qcqp} with $L=0$ is equivalent to the input constraint~\eqref{eq:perturbation}. As \Cref{eq:relu_qcqp,eq:qcqp} are quadratic, the problem~\eqref{eq:main_problem} can be converted to the following QCQP:

\begin{equation}\label{prob:qcqp}
    \gamma^* \coloneqq \min\bra{\bm c^{\top} \bm x_L + \bm c_0 \mid \text{\Cref{eq:relu_qcqp,eq:qcqp}}}.    
\end{equation}

Finally, we can derive an SDP relaxation of the QCQP problem~\eqref{prob:qcqp} by polynomial lifting~\cite{parrilo2000structured,lasserre2009moments}.
We introduce a vector $\bm v = \paren{1, \bm x_0^{\top}, \bm x_1^{\top},\dots , \bm x_L^{\top}}^{\top} \in \mathbb{R}^{1 + \sum_{i=0}^L n_i}$ and a symmetric matrix variable $\bm P=\bm v \bm v^{\top} \in \mathcal S^{1 + \sum_{i=0}^L n_i}$. 
With $\bm v$ and $\bm P$, the problem~\eqref{prob:qcqp} can be equivalently reformulated as an SDP with a rank constraint: $\rank{\bm{P}} = 1$. 
By removing the rank constraint, we can obtain an SDP relaxation of the problem~\eqref{prob:qcqp} as follows:
 \begin{subequations} \label{eq:sdp_relax}
\begin{alignat}{3}
\min_{\bm P}&\quad  \bm{c}^{\top} \bm P\sqbra{\bm{x}_L} + c_0 \\
\st&\quad \bm P\sqbra{\bm{x}_{i+1}} \geq \bm{0} &\quad \paren{i \in\sqbra{L}},  \label{eq:sdp_ineq_nonneg}\\ 
&\quad \bm P\sqbra{\bm{x}_{i+1}} \geq \bm{W}_i \bm P\sqbra{\bm{x}_i} + \bm{b}_i, &\quad \paren{i \in\sqbra{L}},\label{eq:sdp_ineq_relu}\\
&\quad \diag{\bm P\sqbra{\bm{x}_{i+1} \bm{x}_{i+1}^{\top}} - \bm{W}_i \bm P\sqbra{\bm{x}_i \bm{x}_{i+1}^{\top}}} - \bm{b}_i \odot \bm P\sqbra{\bm{x}_{i+1}} = \bm{0}  &\quad\paren{i \in\sqbra{L}}, \label{eq:sdp_relax_relu}\\
&\quad\diag{\bm P\left[\bm{x}_i \bm{x}_i^{\top}\right]} - \paren{\bm{l}_i + \bm{u}_i} \odot \bm P\sqbra{\bm{x}_i} + \bm{l}_i \odot \bm{u}_i \leq \bm{0} &\quad \paren{i \in\sqbra{L}}, \label{eq:sdp_relax_bound}\\
&\quad\bm P\left[1\right] = 1,~\bm P \succeq \bm{O}, \label{eq:sdp_semidefinite}
\end{alignat}
\end{subequations}
where we use the same indexings $\bm P\sqbra{\cdot}$ as in~\cite{SDP-IP}.
The constraints~\eqref{eq:relu_qcqp} and \eqref{eq:qcqp} are  reformulated as  linear constraints in \Cref{eq:sdp_ineq_nonneg,eq:sdp_ineq_relu,eq:sdp_relax_relu,eq:sdp_relax_bound}.
Note also that the constraints in \Cref{eq:sdp_semidefinite} are valid constraints when $\bm P=\bm v \bm v^{\top}$.

\subsubsection{
Preprocessing to Remove Inactive Neurons
} \label{sec:preprocessing}

We always apply a popular preprocessing to remove identified inactive neurons with the upper bound $u_{ij} = 0$ before solving the problem~\eqref{prob:qcqp} throughout this paper. We follow the same preprocessing methodology used in prior work~\cite{batten2021efficient} and use the $\alpha$-CROWN~\cite{xu2020fast} to obtain the upper and lower bounds.
However, we emphasize that it is generally impossible to identify and remove all inactive neurons before solving DNN verification. This preprocessing identifies inactive neurons with lightweight verification methods, such as $\alpha$-CROWN~\cite{xu2020fast}. However, these methods are still incomplete verification techniques, i.e., they cannot identify all inactive neurons. We can identify all inactive neurons if we apply complete verification, such as MIP-based verification, but this is almost equivalent to solving the original problem and is generally impractical. 

\section{Interior-Point Vanishing}
\label{sec:interior_point_vanishing}

We investigate the impact of the interior-point vanishing on the numerical stability and optimality of the SDP relaxation problem~\eqref{eq:sdp_relax}. 
We define interior-point vanishing to describe the phenomenon of SDP-based verification's inability to have feasible interior points (i.e., strict feasibility) when the depth of the DNN being verified increases. We first introduce the critical role of strict feasibility when solving SDP problems and then conduct an empirical and theoretical impact analysis on the SDP relaxation problem.

\subsection{Strict Feasibility and Slater's Condition}
\label{sec:slater}
We first introduce the strict feasibility and Slater's condition, which are essential for the strong duality theorem in SDP problems. 
We start this discussion from the standard form of the SDP problem. 
Let $\mathcal S^n$ be the set of $n\times n$ real-valued symmetric matrices. 
Then, the standard form of the primal SDP problem is given as follows:
\begin{subequations} \label{eq:sdp_primal}
\begin{align}
\min_{\bm X} &\quad \tr{\bm C\bm X} \\
\st &\quad
\tr{\bm A_j  \bm X} = b_j~(j = 1, \ldots, m), \\ 
&\quad \bm X \succeq \bm O, 
\end{align}
\end{subequations}
where the variable  is $\bm X\in \mathcal S^n$, and $\bm A_j\in \mathcal S^n$, $b_j\in \mathbb R~(j=1.\dots, m)$, and $\bm C\in \mathcal S^n$ are given parameters.

When the SDP problem~\eqref{eq:sdp_primal} has a feasible solution with $\bm X\succ \bm O$, we say that the problem is \textit{strictly feasible}, and such a feasible solution $\bm X$ is called a \textit{strictly feasible solution} or an \textit{interior feasible solution}.
Strict feasibility is crucial when solving SDP problems, since Slater's condition, a sufficient condition for strong duality, requires strict feasibility~\cite{lourencco2016structural}.

\begin{theorem}[Strong Duality]
	If the primal problem \eqref{eq:sdp_primal} and its dual are both strictly feasible, then both have bounded optimal solutions
    and have the same optimal value.
\end{theorem}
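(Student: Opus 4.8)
The plan is to establish the result by the classical Lagrangian–duality / separating-hyperplane argument, invoking strict feasibility only at the single decisive point. First I would write down the dual of \eqref{eq:sdp_primal} explicitly. Forming the Lagrangian $L\paren{\bm X, \bm y} = \tr{\bm C \bm X} - \sum_{j=1}^m y_j\paren*{\tr{\bm A_j \bm X} - b_j}$ and minimizing over $\bm X \succeq \bm O$ yields
\[
d^\star \coloneqq \max_{\bm y}\ \bm b^\top \bm y \quad \st \quad \bm S \coloneqq \bm C - \sum_{j=1}^m y_j \bm A_j \succeq \bm O .
\]
The first, routine step is weak duality: for any primal-feasible $\bm X$ and dual-feasible $\paren{\bm y, \bm S}$ one has $\tr{\bm C \bm X} - \bm b^\top \bm y = \tr{\bm S \bm X} \ge 0$, since $\bm S \succeq \bm O$ and $\bm X \succeq \bm O$. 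Hence $p^\star \ge d^\star$, where $p^\star$ denotes the primal optimal value. Substituting the strictly feasible dual point into this inequality bounds the primal objective from below, so $p^\star > -\infty$; symmetrically, strict primal feasibility gives $d^\star < +\infty$. Together with $p^\star \ge d^\star$, both optimal values are already seen to be finite.

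The heart of the proof is to show $p^\star = d^\star$ with attainment. I would introduce the convex set
\[
T \coloneqq \bra*{ \paren{\bm z, w}\in \mathbb R^m\times \mathbb R : \exists\, \bm X\succeq \bm O,\ \tr{\bm A_j \bm X} - b_j = z_j\ \paren{j=1,\dots,m},\ \tr{\bm C \bm X}\le w } ,
\]
which is convex as the image of a convex set under linear maps. By the definition of $p^\star$, the point $\paren{\bm 0, p^\star}$ is a boundary point of $\overline T$, so the supporting-hyperplane theorem yields a nonzero $\paren{\bm y, \lambda}$ with $\bm y^\top \bm z + \lambda w \ge \lambda\, p^\star$ for all $\paren{\bm z, w}\in T$. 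Letting $w\to\infty$ forces $\lambda \ge 0$. Assuming $\lambda>0$ (justified below), I normalize $\lambda=1$; evaluating the inequality at $\paren{\bm z, w} = \paren*{\paren{\tr{\bm A_j \bm X}-b_j}_j, \tr{\bm C \bm X}}$ for arbitrary $\bm X\succeq \bm O$ and then scaling $\bm X$ shows $\bm C + \sum_j y_j \bm A_j \succeq \bm O$, i.e. $-\bm y$ is dual feasible, while setting $\bm X = \bm O$ gives $\paren{-\bm y}^\top \bm b \ge p^\star$. Thus $d^\star \ge p^\star$, and with weak duality $p^\star = d^\star$ and the dual optimum is attained.

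The main obstacle — and the only place strict feasibility is used — is ruling out the degenerate vertical separator $\lambda = 0$. If $\lambda=0$, then $\bm y \ne \bm 0$ and $\bm y^\top \bm z \ge 0$ for every $\bm z$ of the form $\paren{\tr{\bm A_j \bm X}-b_j}_j$ with $\bm X\succeq \bm O$. Evaluating at a strictly feasible $\hat{\bm X}\succ \bm O$ and perturbing $\hat{\bm X}\pm t\bm D$ — which stays positive definite for small $\abs{t}$ in \emph{every} direction $\bm D\in \mathcal S^n$ precisely because $\hat{\bm X}$ is interior — forces $\tr{\paren*{\sum_j y_j \bm A_j}\bm D}=0$ for all $\bm D$, hence $\sum_j y_j \bm A_j = \bm O$. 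After deleting redundant equality constraints (equivalently, assuming the $\bm A_j$ linearly independent), this gives $\bm y = \bm 0$, a contradiction. This step is exactly where the interior point is indispensable, and it reflects the underlying subtlety that the image of the positive semidefinite cone under the maps $\bm X\mapsto \tr{\bm A_j \bm X}$ need not be closed, so that without an interior point a nonzero duality gap can occur. Finally, attainment of the primal optimum, and hence the full symmetric conclusion that both problems possess bounded (finite, attained) optima with a common value, follows by repeating the argument with the roles of primal and dual interchanged, using the primal–dual symmetry of the SDP together with the second strict-feasibility hypothesis.
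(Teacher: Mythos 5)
The paper does not actually prove this theorem: it is stated as a classical result (Slater's condition for conic/SDP duality) and implicitly delegated to the cited literature, so there is no in-paper argument to compare against. Your proposal supplies the standard textbook proof --- Lagrangian dual, weak duality, the convex ``achievable residual/value'' set $T$, a supporting hyperplane at $\paren{\bm 0, p^\star}$, and the use of the interior point solely to exclude the vertical separator $\lambda = 0$ --- and it is essentially correct, including the symmetric pass to get primal attainment from dual strict feasibility. Two small points deserve care. First, as you note, killing the $\lambda=0$ case requires that $\sum_j y_j \bm A_j = \bm O$ force $\bm y = \bm 0$; without linear independence of the $\bm A_j$ a nonzero $\bm y$ in their kernel yields a degenerate (trivially valid) supporting functional, so the reduction to independent constraints is not cosmetic but genuinely needed, and it is worth also observing that feasibility of $\hat{\bm X}$ then forces $\bm y^\top \bm b = 0$, so no contradiction with boundary support could be extracted otherwise. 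Second, the theorem's phrase ``bounded optimal solutions'' is most naturally read as nonemptiness \emph{and boundedness of the optimal solution sets}; your argument establishes attainment and finiteness of the common value, but boundedness of, say, the primal optimal set requires the additional (standard) observation that a strictly feasible dual slack $\bar{\bm S}\succ \bm O$ gives $\tr{\bar{\bm S}\bm X}\le p^\star - \bm b^\top\bar{\bm y}$ on the primal optimal face, whence $\lambda_{\min}\paren{\bar{\bm S}}\,\tr{\bm X}$ is bounded and the set is compact. With those two refinements made explicit, the proof is complete and is the expected argument for this classical statement.
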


To determine whether the current solution is optimal, the primal-dual interior-point method requires the gap between the objective functions of the primal and dual problems to be sufficiently close to zero. 
However, when the strong duality does not hold, such a gap is not necessarily zero for optimal solutions, and thus, the primal-dual interior-point method may fail to determine the optimality of the current solution.
In addition, \cite{sekiguchi2021perturbation} reports that the lack of strong duality causes serious numerical instability and gives wrong optimal values and solutions when the problem is not strictly feasible. This numerical instability is critical for interior-point methods and can be even more severe for first-order methods~\cite{boyd2011distributed, sun2020sdpnal, dathathri2020enabling}, as the latter cannot exploit second-order Hessian information during optimization. We further discuss this in~\Cref{sec:layer_bound}.

To identify the strict feasibility of the SDP problem~\eqref{eq:sdp_primal}, we design the following verification problem:
\begin{proposition}[Verification of Strict Feasibility]\label{prop:strong_feasibility}
	Consider the following problem: 
	\begin{subequations} \label{eq:sdp_eign}
		\begin{align}
		\max_{\bm X, \lambda} \quad &\lambda \\
		\st 
		\quad & \tr{\bm A_j \cdot \paren{\bm X + \lambda \bm I}}  = b_j ~(j = 1, \dots, m)\\
		&\bm X \succeq \bm O,
		\end{align}
		\end{subequations}
		where $\bm I$ is the identity matrix and $\lambda \in \mathbb R$ is an auxiliary variable. 	
	The original SDP problem~\eqref{eq:sdp_primal} is strictly feasible if and only if the optimal value of the problem~\eqref{eq:sdp_eign} is positive.
\end{proposition}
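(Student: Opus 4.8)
The plan is to prove both directions of the biconditional by relating the auxiliary variable $\lambda$ in problem~\eqref{eq:sdp_eign} to the existence of a strictly feasible solution of~\eqref{eq:sdp_primal}. The key observation is that if $(\bm X, \lambda)$ is feasible for~\eqref{eq:sdp_eign}, then the matrix $\bm Y \coloneqq \bm X + \lambda \bm I$ automatically satisfies the affine constraints $\tr{\bm A_j \bm Y} = b_j$ for all $j$, so $\bm Y$ is a candidate feasible point for the original problem. Whether $\bm Y$ lands in the interior depends on the sign of $\lambda$ together with the positive semidefiniteness of $\bm X$.

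First I would prove the ``if'' direction. Suppose the optimal value of~\eqref{eq:sdp_eign} is positive, so there is a feasible $(\bm X, \lambda)$ with $\lambda > 0$ and $\bm X \succeq \bm O$. Set $\bm Y = \bm X + \lambda \bm I$. Then $\tr{\bm A_j \bm Y} = b_j$ holds by feasibility. Since $\bm X \succeq \bm O$ and $\lambda \bm I \succ \bm O$ (as $\lambda > 0$), the sum satisfies $\bm Y \succeq \bm O + \lambda \bm I \succ \bm O$; concretely, for any nonzero $\bm z$, $\bm z^\top \bm Y \bm z = \bm z^\top \bm X \bm z + \lambda \norm{\bm z}_2^2 \geq \lambda \norm{\bm z}_2^2 > 0$. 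Hence $\bm Y$ is a strictly feasible solution of~\eqref{eq:sdp_primal}.

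Next I would prove the ``only if'' direction. Suppose~\eqref{eq:sdp_primal} is strictly feasible, so there exists $\bm Y \succ \bm O$ with $\tr{\bm A_j \bm Y} = b_j$ for all $j$. Because $\bm Y \succ \bm O$, its smallest eigenvalue $\lambda_{\min}(\bm Y)$ is strictly positive; pick any $\lambda$ with $0 < \lambda \leq \lambda_{\min}(\bm Y)$ and set $\bm X \coloneqq \bm Y - \lambda \bm I$. Then $\bm X \succeq \bm O$ since all eigenvalues of $\bm Y$ are at least $\lambda$, and $\tr{\bm A_j (\bm X + \lambda \bm I)} = \tr{\bm A_j \bm Y} = b_j$, so $(\bm X, \lambda)$ is feasible for~\eqref{eq:sdp_eign} with a positive objective value. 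This forces the optimal value of~\eqref{eq:sdp_eign} to be positive, completing the equivalence.

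The argument is essentially a change of variables $\bm Y \leftrightarrow \bm X + \lambda \bm I$, and I do not expect a genuine obstacle; the only point requiring mild care is ensuring that the maximization in~\eqref{eq:sdp_eign} is well-posed so that ``optimal value is positive'' is meaningful. I would note that the feasible set is nonempty (it always contains points with $\lambda \le 0$ whenever the affine system is consistent), so the supremum is well-defined, and the two directions above show the supremum exceeds zero exactly when a strictly feasible $\bm Y$ exists.
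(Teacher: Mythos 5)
Your proof is correct and follows essentially the same route as the paper's: the ``if'' direction takes a feasible pair with $\lambda>0$ and observes $\bm X+\lambda\bm I\succ\bm O$ is feasible for~\eqref{eq:sdp_primal}, and the ``only if'' direction shifts a strictly feasible $\bm X'$ by $\lambda_{\min}(\bm X')\bm I$ to build a feasible pair with positive objective. Your added remark that the supremum need not be attained (only that some feasible point has $\lambda>0$) is a minor point of extra care beyond the paper's argument, which assumes an optimal solution exists.
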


\begin{proof}
See \Cref{sec:proof_strong_feasibility}.
\end{proof}

We note that the SDP-based relaxation problem~\eqref{eq:sdp_relax} can be equivalently expressed in the standard form~\eqref{eq:sdp_primal} since the problem~\eqref{eq:sdp_primal} is a standard form of SDP problems.
Therefore, we can check the strict feasibility of the problem~\eqref{eq:sdp_relax} by solving its corresponding problem~\eqref{eq:sdp_eign}. \Cref{sec:rewrite_strong_feasibility} describes the standard SDP form of \Cref{prop:strong_feasibility}.

\subsection{Empirical Analysis} \label{sec:empirical_analysis}

We empirically investigate the impact of the interior-point vanishing problem using the strict feasibility verification problem~\eqref{eq:sdp_eign} and demonstrate that the interior-point vanishing becomes more severe as the number of layers $L$ increases. 

\noindent\textbf{Experimental Setup:}
We investigated how frequently the interior-point vanishing problem arises in the problem~\eqref{eq:sdp_relax} and how it depends on the number of layers in NN models by solving the problem~\eqref{eq:sdp_eign}. 
In this experiment, we used two datasets: MNIST~\cite{deng2012mnist} and Fashion-MNIST~\cite{xiao2017fashion}, and trained fully connected ReLU networks with cross-entropy loss and AdaDelta optimizer~\cite{zeiler2012adadelta}. 
For ReLU networks, we set the number of layers $L$ from 2 to 16 and the number of neurons in each layer to 20. 

To construct the problem instances for the problem~\eqref{eq:sdp_eign}, we selected ten images from each dataset and downsampled them to \(5 \ {\rm pixel} \times 5 \ {\rm pixel}\) to reduce the computational load, which is caused by high-precision computation. 
For the instance $\bar{\bm x}$ for verification, we selected the first ten images from each dataset and compressed them to \(5 \ {\rm pixel} \times 5 \ {\rm pixel}\). 
For each instance, we computed the upper and lower bounds $\paren{\bm l_i, \bm u_i}$ of each layer by the $\alpha$-CROWN~\cite{xu2020fast}. We removed all inactivate neurons with $u_{ij} = 0$, similar to prior work~\cite{batten2021efficient}. We then solved the corresponding strict feasibility verification problem~\eqref{eq:sdp_eign} by using the SDPA-GMP~\cite{SDPA-GMP}.
Note that the SDPA-GMP solver employs multi-precision arithmetic in a primal-dual interior point method, making it more reliable than standard double-precision arithmetic for ill-conditioned SDP problem instances. We used the hexadecuple precision (512 bits).

\noindent\textbf{Results and Analysis:}
\Cref{tbl:min_eign} summarizes the results for each $L$.   
For each dataset, we constructed five distinct models with different random seeds, yielding a total of 50 problem instances per dataset (10 images times 5 models).
The column \textit{Solved (\%)} in \Cref{tbl:min_eign} shows the fraction of instances that SDPA-GMP successfully terminated its computation with an optimality guarantee, and the column \emph{Avg. Obj.} represents the average of the optimal values across the solved instances. 
From Table~\ref{tbl:min_eign}, we observe that for both MNIST and Fashion-MNIST, SDPA-GMP consistently terminated its computation for most instances when the DNN depth was less than 10.  
However, as $L$ increased, the \textit{Solved (\%)} decreased, and when $L= 16$, SDPA-GMP failed to finish the computation for all instances in both datasets.
Furthermore, when $L\ge 10$,  \emph{Avg. Obj.} was sufficiently close to zero regardless of the datasets, indicating that the SDP-based verification problem~\eqref{eq:sdp_relax} did not have strictly feasible solutions.
This lack of strict feasibility appears to be a major factor causing SDPA-GMP to fail.  
Overall, these results highlight that the interior-point vanishing problem is likely to occur in the SDP-based DNN verification when $L$ is large.

\noindent\textbf{Interior-Point Vanishing in Benchmark Networks:} 
To further validate our findings, we also evaluated networks commonly used in prior verification research~\cite{salman2019convex, chiu2023tight}, including models trained with different robustification techniques such as dual formulation training, adversarial training, and standard training. Our experiments on these networks (ranging from 2 to 9 layers with 100 neurons per layer) consistently showed interior-point vanishing across all training methodologies, with minimum eigenvalues at or near zero. This confirms that the phenomenon is an inherent property of the SDP relaxation rather than a training artifact, and affects practical networks used in the verification works. Detailed results are in~\Cref{sec:appendix_large_network}.

\begin{table}[t!]
\centering
\setlength{\tabcolsep}{4pt}
\caption{Impact of the interior-point vanishing problem on the different layer depths.
The gray rows indicate that the interior-point vanishing almost always happens as the minimum eigenvalues are almost zero, and even negative due to numerical errors.
}
\begin{tabular}{crrrr}
\toprule
$L$ & \multicolumn{2}{c}{MNIST} & \multicolumn{2}{c}{Fashion-MNIST} \\
\cmidrule{2-5}
 & Solved (\%) & Avg. Obj.   &  Solved (\%)  & Avg. Obj.  \\
\midrule
2 & 98\% & 2.13$\pm$1.93E-05 & 100\% & 5.79$\pm$5.76E-05 \\
4 & 98\% & 1.72$\pm$1.45E-06 & 100\% & 4.93$\pm$5.73E-06 \\
6 & 98\% & 8.06$\pm$5.38E-08 & 98\% & 1.56$\pm$1.12E-07 \\
8 & 98\% & 3.52$\pm$3.47E-09 & 94\% & 4.98$\pm$5.88E-09 \\ \rowcolor{lightgray!30}
10 & 18\% & $-$4.09$\pm$1.70E-10 & 26\% & $-$2.57$\pm$3.11E-10 \\ \rowcolor{lightgray!30}
12 & 2\% & $-$8.31$\pm$2.96E-10 & 4\% & $-$6.90$\pm$2.62E-10 \\ \rowcolor{lightgray!30}
16 & 0\% & $-$1.20$\pm$8.24E-09 & 0\% & $-$9.35$\pm$3.47E-10 \\ 
\bottomrule
\label{tbl:min_eign}
\end{tabular}
\end{table}

\subsubsection{Constraints Related to Interior-Point Vanishing Problem}

To identify the constraints causing the interior-point-vanishing problem, we analyze which constraint in~\eqref{eq:sdp_relax} is significant in the problem's feasibility by removing the constraints one by one. We find that the ReLU equality constraint~\eqref{eq:sdp_relax_relu} and the upper bound $u_i$ in \Cref{eq:sdp_relax_bound} have high sensitivity to the problem feasibility. Details are in \Cref{sec:prelim_analysis}. We will explore the relaxation methods on these constraints in~\Cref{sec:methodology}.

\subsection{Theoretical Analysis}  \label{sec:theoritical_analysis}

We explore the theoretical reasoning behind why the interior-point vanishing problem happens when the depth of DNN increases by examining the structures of the problem~\eqref{eq:sdp_relax}.

\subsubsection{
Existence of Inactive Neurons	
} \label{sec:zero_diag}

We first demonstrate that interior-point vanishing occurs—that is, problem~\eqref{eq:sdp_relax} lacks strictly feasible solutions—when the DNN contains inactive neurons whose outputs are always zero. The following proposition states that a positive semidefinite matrix $\bm P\succeq \bm O$ is not positive definite if it has a diagonal element $\paren{\bm P}_{ii}=0$.

\begin{proposition}\label{prop:zero_diag}
	For a positive semidefinite matrix $\bm P\succeq \bm O$, if there exists a diagonal element $\paren{\bm P}_{ii}=0$, $\lambda_{\min}\paren{\bm P}=0$, that is $\bm P$ is not positive definite.
\end{proposition}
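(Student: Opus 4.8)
The plan is to show that a vanishing diagonal entry of a positive semidefinite matrix forces the entire corresponding row and column to vanish, so that the $i$-th standard basis vector $\bm e_i$ lies in the kernel of $\bm P$. This exhibits $0$ as an eigenvalue, and since positive semidefiniteness guarantees all eigenvalues are nonnegative, $0$ must in fact be the smallest, giving $\lambda_{\min}\paren{\bm P}=0$.

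The key step is to establish that $\paren{\bm P}_{ij}=0$ for every $j\neq i$. I would argue this by testing the quadratic form on the vector $\bm e_i + t\,\bm e_j$ for an arbitrary real parameter $t$. Since $\bm P\succeq \bm O$, we have
\begin{equation*}
\paren{\bm e_i + t\,\bm e_j}^\top \bm P \paren{\bm e_i + t\,\bm e_j} = \paren{\bm P}_{ii} + 2t\,\paren{\bm P}_{ij} + t^2\,\paren{\bm P}_{jj} = 2t\,\paren{\bm P}_{ij} + t^2\,\paren{\bm P}_{jj} \geq 0,
\end{equation*}
where we used $\paren{\bm P}_{ii}=0$. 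If $\paren{\bm P}_{ij}\neq 0$, then choosing $t$ small in absolute value and of sign opposite to $\paren{\bm P}_{ij}$ makes the linear term dominate the quadratic term, rendering the expression negative—a contradiction. Hence $\paren{\bm P}_{ij}=0$. Equivalently, one may invoke that the $2\times 2$ principal submatrix indexed by $\bra{i,j}$ is positive semidefinite, so its determinant $-\paren{\bm P}_{ij}^2$ is nonnegative, again forcing $\paren{\bm P}_{ij}=0$.

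Because $j$ was arbitrary, the entire $i$-th row and column of $\bm P$ vanish, so $\bm P\,\bm e_i = \bm 0 = 0\cdot\bm e_i$. Thus $\bm e_i$ is an eigenvector of $\bm P$ with eigenvalue $0$. As $\bm P\succeq \bm O$ has nonnegative spectrum, this is the minimum eigenvalue, i.e.\ $\lambda_{\min}\paren{\bm P}=0$, so $\bm P$ is not positive definite.

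This result is elementary, so there is no substantive obstacle; the only point requiring mild care is the rigorous justification that $\paren{\bm P}_{ij}=0$, which the linear-term-dominance argument (or the equivalent principal-minor argument) settles cleanly.
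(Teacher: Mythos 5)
Your proof is correct. Note that the paper does not supply its own argument for this proposition---it simply cites Horn and Johnson---so there is nothing to diverge from; your row-and-column-vanishing argument (via the quadratic form on $\bm e_i + t\,\bm e_j$, or equivalently the $2\times 2$ principal minor) is exactly the standard textbook proof and is airtight. One small remark: you prove slightly more than is needed. To get $\lambda_{\min}\paren{\bm P}=0$ it already suffices to observe that the Rayleigh quotient satisfies $\lambda_{\min}\paren{\bm P} \le \bm e_i^\top \bm P\, \bm e_i = \paren{\bm P}_{ii} = 0$, while positive semidefiniteness gives $\lambda_{\min}\paren{\bm P}\ge 0$; the fact that the entire $i$-th row and column vanish (so that $\bm e_i$ is genuinely a null vector) is a stronger conclusion that the proposition does not require, though it costs you only a few extra lines and is the form of the statement most references record.
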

\begin{proof}
See~\cite{Horn2012-lu}, for example.
\end{proof}

According to \Cref{prop:zero_diag}, the minimum eigenvalue of feasible $\bm P$ will always be zero if some diagonal elements of the matrix variable $\bm P$ in the problem~\eqref{eq:sdp_relax} are forced to be zero. In this case, the problem \eqref{eq:sdp_relax} does not have any strictly feasible solutions.  
We explore a situation in which this issue arises within problem~\eqref{eq:sdp_relax}.
Let us consider a situation where an entry of the upper bound vector $\bm  u_i$ is zero for the $i$-th layer, which means that the corresponding neuron is always inactive for all $\bm x_0 \in \{\bm x \mid \|\bm x-\bar{\bm x}\|_\infty\le \rho\}$. 
Let $j$ be the corresponding index of the neuron of the $i$-th layer
with $\paren{\bm u_i}_j=0$.  
In this case, its corresponding lower bound $\paren{\bm l_i}_j \le 0$, and from the constraint~\eqref{eq:sdp_relax_bound} for the $j$-th neuron of the $i$-th layer, we have
\begin{equation*}
\paren{\bm P\sqbra{ \bm x_i \bm x_i^\top}}_{jj} \le \paren*{\paren{ \bm l_i + \bm u_i} \odot  \bm P\sqbra{\bm x_i} + \bm l_i \odot \bm u_i}_j  
\le 0. 
\end{equation*}
Along with the semidefinite constraint $\bm P\succeq \bm O$ and the constraint~\eqref{eq:sdp_relax_bound}, we have $\paren{\bm P\sqbra{\bm x_i \bm x_i^\top }}_{jj}=0$. 
This implies that for all feasible $\bm P$ for the problem~\eqref{eq:sdp_relax}, its minimum eigenvalue is zero. 
Consequently, we see that if there are neurons that are always inactive for all input $\bm x_0\in \bra{\bm x \mid \norm{\bm x-\bar{\bm x}}_\infty \le \rho}$, the problem~\eqref{eq:sdp_relax} does not have any strictly feasible solutions.
As discussed in~\cref{sec:preprocessing}, we cannot identify and remove all inactive neurons before solving DNN verification. Since even a single inactive neuron can cause interior-point vanishing, this indicates that there is no trivial method to address the interior-point vanishing problem.

\subsubsection{Minimum Eigenvalue Bound}

We examine the minimum eigenvalue of feasible $\bm P$ to the problem~\eqref{eq:sdp_relax} and show that the norm of the weight matrix of each layer is crucial for the strict feasibility of the problem~\eqref{eq:sdp_relax}. 
For notational simplicity, let us define a constant matrix horizontally concatenating $\bm b_i$ and $\bm W_i$ as 
	$\widetilde{\bm W}_i = 
	\paren{
		\bm b_i ~ \bm  W_i
	}~\paren{i\in \sqbra{L}}$.

First, we show that the trace of $\bm P\sqbra{\bm x_i \bm x_i^\top}$ for each $i\in \sqbra{L}$ is bounded by a nonnegative constant. 
We formally express this in the following lemma:
\begin{lemma}[Trace Bound Propagation] \label{lem:trace}
Let $T_0 \coloneqq \paren{\norm{\bar{\bm x}}_2+\rho \sqrt{n_0}}^2\ge 0$.
For each $i\in \sqbra{L-1}$, recursively define
\begin{equation*}
	T_{i+1} \coloneqq (1+T_i)\cdot \norm{\widetilde{\bm W}_i}_F^2\quad \paren{i\in \sqbra{L-1}}.	
\end{equation*}
Then, for any feasible solution $\bm P\succeq \bm O$ to the problem~\eqref{eq:sdp_relax}, the following inequality holds:
\begin{equation}\label{eq:trace_bound}
	\tr{\bm P\sqbra{\bm x_i \bm x_i^\top}} \le T_i\quad \paren{i\in \sqbra{L}}.
\end{equation}
\end{lemma}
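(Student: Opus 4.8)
The plan is to establish \eqref{eq:trace_bound} by induction on the layer index $i$, propagating the trace bound forward exactly as the recursion defining $T_i$ prescribes: the base case $i=0$ uses the input-layer bound constraint, and the inductive step couples the ReLU equality~\eqref{eq:sdp_relax_relu} with the positive semidefiniteness $\bm P\succeq \bm O$.

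For the base case I would read off the $j$-th entry of the bound constraint~\eqref{eq:sdp_relax_bound} at $i=0$, namely $\paren{\bm P[\bm x_0\bm x_0^\top]}_{jj}\le \paren{l_{0j}+u_{0j}}\paren{\bm P[\bm x_0]}_j - l_{0j}u_{0j}$, and combine it with the $2\times 2$ principal minor of $\bm P$ on the ``$1$''-index and the $\paren{\bm x_0}_j$-index, which forces $\paren{\bm P[\bm x_0]}_j^2\le \paren{\bm P[\bm x_0\bm x_0^\top]}_{jj}$. Writing $t=\paren{\bm P[\bm x_0]}_j$, these combine to $\paren{t-l_{0j}}\paren{t-u_{0j}}\le 0$, so $t\in[l_{0j},u_{0j}]$ and the diagonal entry is at most the maximum of the affine map $t\mapsto \paren{l_{0j}+u_{0j}}t-l_{0j}u_{0j}$ over that interval, i.e. $\max\paren{l_{0j}^2,u_{0j}^2}=\paren{\abs{\bar x_j}+\rho}^2$. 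Summing over $j$ and invoking $\norm{\bar{\bm x}}_1\le\sqrt{n_0}\,\norm{\bar{\bm x}}_2$ (Cauchy--Schwarz) then gives $\tr{\bm P[\bm x_0\bm x_0^\top]}\le T_0$.

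For the inductive step, the key observation is that taking the trace of the diagonal ReLU equality~\eqref{eq:sdp_relax_relu} collapses it into a clean identity. Setting $\hat{\bm x}_i=\paren{1,\bm x_i^\top}^\top$ and $\bm M\coloneqq \bm P[\hat{\bm x}_i\bm x_{i+1}^\top]$ (the off-diagonal block of $\bm P$ between $\hat{\bm x}_i$ and $\bm x_{i+1}$), summing the diagonal equality yields $\tr{\bm P[\bm x_{i+1}\bm x_{i+1}^\top]}=\tr{\widetilde{\bm W}_i\bm M}$, since the concatenation $\widetilde{\bm W}_i=\paren{\bm b_i~\bm W_i}$ absorbs both the $\bm W_i$-term and the $\bm b_i$-term uniformly. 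Cauchy--Schwarz for the Frobenius inner product bounds this by $\norm{\widetilde{\bm W}_i}_F\norm{\bm M}_F$. To control $\norm{\bm M}_F$, I would use that the principal submatrix of $\bm P$ on the indices $\paren{\hat{\bm x}_i,\bm x_{i+1}}$ is positive semidefinite; viewing it as a $2\times 2$ block matrix with diagonal blocks $\bm A=\bm P[\hat{\bm x}_i\hat{\bm x}_i^\top]$, $\bm D=\bm P[\bm x_{i+1}\bm x_{i+1}^\top]$ and off-diagonal $\bm M$, the entrywise minor bound $M_{jk}^2\le A_{jj}D_{kk}$ sums to $\norm{\bm M}_F^2\le \tr{\bm A}\tr{\bm D}$. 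Because $\tr{\bm A}=1+\tr{\bm P[\bm x_i\bm x_i^\top]}\le 1+T_i$ by the induction hypothesis, writing $S=\tr{\bm P[\bm x_{i+1}\bm x_{i+1}^\top]}$ produces the self-referential inequality $S\le \norm{\widetilde{\bm W}_i}_F\sqrt{\paren{1+T_i}S}$, which rearranges to $S\le \paren{1+T_i}\norm{\widetilde{\bm W}_i}_F^2=T_{i+1}$ (dividing by $\sqrt{S}$, with the degenerate case $S=0$ immediate since $T_{i+1}\ge 0$).

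The hard part will be the inductive step, and specifically the treatment of the cross block $\bm M$. The two ideas to get right are: (i) recognizing that tracing the ReLU equality produces precisely $\tr{\widetilde{\bm W}_i\bm M}$, so that the augmented weight $\widetilde{\bm W}_i$ and the augmented variable $\hat{\bm x}_i$ emerge naturally and the bias is handled on the same footing as the weights; and (ii) bounding $\norm{\bm M}_F$ using only $\bm P\succeq\bm O$ (via the minor bound $M_{jk}^2\le A_{jj}D_{kk}$) rather than any explicit constraint on the cross terms, which is exactly what lets the previous-layer bound $T_i$ feed forward. The subsequent manipulation $S\le C\sqrt{S}\Rightarrow S\le C^2$ is routine modulo isolating the $S=0$ case.
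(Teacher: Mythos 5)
Your proof is correct and follows essentially the same route as the paper's: induction on $i$, with the inductive step obtained by tracing the ReLU equality~\eqref{eq:sdp_relax_relu} to get $\tr{\widetilde{\bm W}_i\widetilde{\bm P}_{i+1}}$, applying Cauchy--Schwarz for the Frobenius inner product, and bounding the cross block via $\bm P\succeq\bm O$ to reach the self-referential inequality $S\le\norm{\widetilde{\bm W}_i}_F\sqrt{(1+T_i)S}$. The only cosmetic differences are that you handle the base case coordinate-wise (closing with $\norm{\bar{\bm x}}_1\le\sqrt{n_0}\,\norm{\bar{\bm x}}_2$) where the paper works directly with $\norm{\bm P\sqbra{\bm x_0}}_2^2\le\tr{\bm P\sqbra{\bm x_0\bm x_0^\top}}$, and that you bound $\norm{\bm M}_F^2\le\tr{\bm A}\tr{\bm D}\le(1+T_i)\tr{\bm D}$ by entrywise $2\times 2$ minors where the paper majorizes the top-left block by $(1+T_i)\bm I$ and takes a Schur complement---both yield the identical inequality.
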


\begin{proof}
    See Appendix~\ref{lem:trace}.
\end{proof}

While \Cref{lem:trace} focuses on bounding $\tr{\bm{P}\sqbra{\bm{x}_i \bm{x}_i^\top}}$ for each $i\in \sqbra{L}$, we can also derive individual upper bounds for each diagonal element of feasible $\bm{P}$ for the problem~\eqref{eq:sdp_relax} in a similar manner.

\begin{lemma}[Element-wise Bound Propagation]\label{lem:elementwise}
	Let $T_i$ be the constant defined in \Cref{lem:trace}. 
	Then, for each $i\in \sqbra{L}$, 	
	the $j$-th diagonal element of $\paren{\bm{P}\sqbra{\bm{x}_i \bm{x}_i^\top}}_{jj}$ is bounded as
	\begin{equation*}
		\paren{\bm{P}\sqbra{\bm{x}_{i+1} \bm{x}_{i+1}^\top}}_{jj} \le \paren{1+T_{i}} \cdot \norm{\bm{\widetilde{W}}_{i}\paren{j,\colon}}_2^2, 
	\end{equation*}	
	where $\bm{\widetilde{W}}_{i}\paren{j,\colon}$ is the $j$-th row vector of $\bm{\widetilde{W}}_i$.
\end{lemma}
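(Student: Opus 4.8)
The plan is to read a closed-form expression for the target diagonal element directly off the ReLU equality constraint~\eqref{eq:sdp_relax_relu}, and then control it by a Cauchy--Schwarz argument whose two factors are the weight-row norm $\norm{\widetilde{\bm W}_i\paren{j,\colon}}_2$ and a cross-moment norm that I bound using the positive semidefiniteness of $\bm P$ together with \Cref{lem:trace}. Write $\widetilde{\bm W}_i = \paren{\bm b_i~\bm W_i}$ as in \Cref{lem:trace}, and set $D \coloneqq \paren{\bm P\sqbra{\bm x_{i+1}\bm x_{i+1}^\top}}_{jj} \ge 0$, where nonnegativity follows from $\bm P\succeq \bm O$.

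First I would take the $j$-th component of \eqref{eq:sdp_relax_relu} to obtain the exact identity $D = \paren{\bm W_i\bm P\sqbra{\bm x_i\bm x_{i+1}^\top}}_{jj} + \paren{\bm b_i}_j\paren{\bm P\sqbra{\bm x_{i+1}}}_j$. Introducing $p_j \coloneqq \paren{\bm P\sqbra{\bm x_{i+1}}}_j$ and the $j$-th column $\bm q_j \coloneqq \bm P\sqbra{\bm x_i\bm x_{i+1}^\top}\paren{\colon,j}$ of the cross block, this reads $D = \widetilde{\bm W}_i\paren{j,\colon}\cdot \bm m_j$ with $\bm m_j \coloneqq \paren{p_j,\ \bm q_j^\top}^\top$. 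Cauchy--Schwarz then gives $D \le \norm{\widetilde{\bm W}_i\paren{j,\colon}}_2\,\norm{\bm m_j}_2$, so the task reduces to bounding $\norm{\bm m_j}_2$.

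The crux is to bound $\norm{\bm m_j}_2^2$ entry by entry through the $2\times 2$ principal minors of $\bm P$: since every principal submatrix of $\bm P\succeq \bm O$ is positive semidefinite, nonnegativity of the relevant $2\times 2$ determinants yields $p_j^2 \le \bm P\sqbra{1}\cdot D = D$ (using $\bm P\sqbra{1}=1$) and $\paren{\bm q_j}_k^2 \le \paren{\bm P\sqbra{\bm x_i\bm x_i^\top}}_{kk}\cdot D$ for each $k$. Summing over the constant coordinate and the $n_i$ coordinates of $\bm x_i$ gives
\begin{equation*}
\norm{\bm m_j}_2^2 \le D\paren*{1 + \sum_{k}\paren{\bm P\sqbra{\bm x_i\bm x_i^\top}}_{kk}} = D\paren*{1 + \tr{\bm P\sqbra{\bm x_i\bm x_i^\top}}} \le D\paren{1+T_i},
\end{equation*}
the last inequality being \Cref{lem:trace}. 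Combining with Cauchy--Schwarz, $D \le \norm{\widetilde{\bm W}_i\paren{j,\colon}}_2\sqrt{\paren{1+T_i}D}$; the claim is immediate if $D=0$, and otherwise dividing by $\sqrt D$ and squaring gives $D \le \paren{1+T_i}\norm{\widetilde{\bm W}_i\paren{j,\colon}}_2^2$, as desired. I expect the main obstacle to be this third step: the bound on $\norm{\bm m_j}_2$ must be justified from the PSD structure of $\bm P$ via principal minors rather than from a rank-one factorization, since a feasible $\bm P$ need not be rank one, and the augmented constant coordinate has to be handled correctly through the normalization $\bm P\sqbra{1}=1$.
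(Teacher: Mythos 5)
Your proof is correct and follows essentially the same route the paper intends (it omits the proof, noting it mirrors \Cref{lem:trace} restricted to the $j$-th diagonal entry): the key identity $D=\widetilde{\bm W}_i\paren{j,\colon}\cdot\bm m_j$ from \eqref{eq:sdp_relax_relu}, Cauchy--Schwarz, and the bound $\norm{\bm m_j}_2^2\le\paren{1+T_i}D$ via \Cref{lem:trace} are exactly the ingredients of the paper's argument. The only cosmetic difference is that you obtain $\norm{\bm m_j}_2^2\le\paren{1+T_i}D$ by summing $2\times 2$ principal minors rather than by the Schur-complement step used in the proof of \Cref{lem:trace}; both are valid and yield the same inequality.
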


We can show \Cref{lem:elementwise} in the same way as \Cref{lem:trace} by focusing on the $j$-th diagonal element of $\bm{P}\sqbra{\bm{x}_i \bm{x}_i^\top}$, and we omit the proof here.

From \Cref{lem:elementwise} and the fact that $\lambda_{\min}\paren{\bm{P}} \le \min_{j} \paren{\bm{P}}_{jj}$ for $\bm{P} \succeq \bm{O}$, we finally provide an upper bound of the minimum eigenvalue of feasible $\bm{P} \succeq \bm{O}$ to the problem~\eqref{eq:sdp_relax}.
\begin{theorem}[Minimum Eigenvalue Bound]\label{thm:min_eign}
	Let $T_i$ be the constant defined in \Cref{lem:trace}. 
	Then, for all feasible solutions $\bm{P}$ to the problem~\eqref{eq:sdp_relax}, it follows that
	\begin{equation*}
		\lambda_{\min}\paren{\bm{P}} \le \min_{i\in \sqbra{L}}\bra*{\min_{j=1,\dots,n_i} \bra*{\paren{1+T_{i}} \cdot \norm{\bm{\widetilde{W}}_{i}\paren{j,\colon}}_2^2}}.
	\end{equation*}
\end{theorem}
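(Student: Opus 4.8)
The plan is to derive \Cref{thm:min_eign} by combining two ingredients that are already available: the element-wise diagonal bound of \Cref{lem:elementwise}, and the elementary variational fact that a positive semidefinite matrix has minimum eigenvalue no larger than any of its diagonal entries. The latter lets me transfer the per-entry bounds on the blocks $\bm P\sqbra{\bm x_{i+1}\bm x_{i+1}^\top}$ into an upper bound on $\lambda_{\min}\paren{\bm P}$ for the full lifted matrix of any feasible solution to \eqref{eq:sdp_relax}.

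First I would record the variational fact. For $\bm P\succeq \bm O$ the Rayleigh-quotient characterization gives $\lambda_{\min}\paren{\bm P}=\min_{\norm{\bm z}_2=1}\bm z^\top \bm P\bm z$, so evaluating the quadratic form at a standard basis vector $\bm e_k$ yields $\lambda_{\min}\paren{\bm P}\le \bm e_k^\top \bm P\bm e_k=\paren{\bm P}_{kk}$ for \emph{every} diagonal index $k$. Next, for a fixed feasible $\bm P$ and any admissible pair $\paren{i,j}$ with $i\in\sqbra{L}$, I specialize $k$ to the diagonal position of $\bm P$ that coincides with the $\paren{j,j}$ entry of the block $\bm P\sqbra{\bm x_{i+1}\bm x_{i+1}^\top}$. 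Chaining the variational bound with \Cref{lem:elementwise} then produces, for each such $\paren{i,j}$,
\[
\lambda_{\min}\paren{\bm P}\le \paren{\bm P\sqbra{\bm x_{i+1}\bm x_{i+1}^\top}}_{jj}\le \paren{1+T_i}\cdot \norm{\bm{\widetilde{W}}_i\paren{j,\colon}}_2^2.
\]
Since the left-hand side is independent of $\paren{i,j}$ and the inequality holds for every admissible pair, $\lambda_{\min}\paren{\bm P}$ is bounded above by each individual quantity on the right, hence by their minimum over $i$ and $j$, which is precisely the claimed bound.

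The argument presents no genuine obstacle; the only point requiring care is the indexing bookkeeping—correctly identifying the diagonal index of the full lifted matrix $\bm P$ with the $\paren{j,j}$ position inside the block $\bm P\sqbra{\bm x_{i+1}\bm x_{i+1}^\top}$, and ensuring that the row range of $\bm{\widetilde{W}}_i$ is matched to the neuron count of the corresponding layer so that the inner minimization in the statement ranges over exactly the diagonal entries covered by \Cref{lem:elementwise}. Once the chain of inequalities is in place, passing to the minimum is purely formal, and nothing beyond the stated lemma and the positive semidefinite diagonal fact is needed.
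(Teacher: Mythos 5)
Your proposal is correct and is essentially the paper's own argument: the theorem is obtained exactly by combining \Cref{lem:elementwise} with the fact that $\lambda_{\min}\paren{\bm P}\le \min_k \paren{\bm P}_{kk}$ for $\bm P\succeq \bm O$, then minimizing over the layer and neuron indices. The only caveat you already flag—matching the row index range of $\bm{\widetilde{W}}_i$ to the neuron count of layer $i+1$—is a bookkeeping point present in the paper's statement as well, not a gap in your reasoning.
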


\Cref{thm:min_eign} highlights that the minimum norm of the extended row vector $\bm{\widetilde{W}}_{i}\paren{j,\colon}$ plays a crucial role in controlling the possible minimum eigenvalue of feasible $\bm{P}$ to the problem~\eqref{eq:sdp_relax}.
Specifically, \Cref{thm:min_eign} indicates that when there exists a single neuron $j$ in some layer $i\in\sqbra{L}$ with a small $\norm{\bm{\widetilde{W}}_{i}\paren{j,\colon}}_2$, the minimum eigenvalue of any feasible $\bm{P}$ is constrained to be close to zero. 
\emph{Consequently, the presence of even a single such neuron can trigger the interior-point vanishing problem in SDP-based DNN verification.}

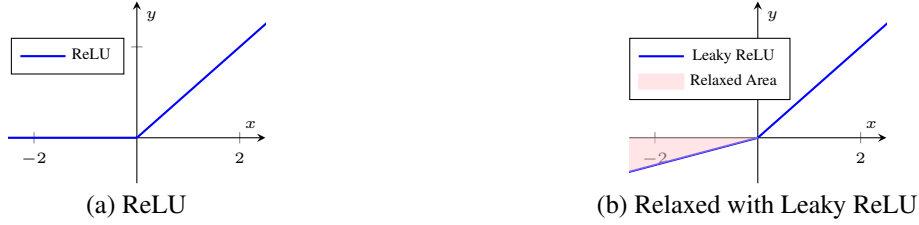
\begin{figure}[t]
    \centering
    \begin{minipage}[b]{0.42\linewidth}
        \centering
        \begin{tikzpicture}
            \tikzstyle{every node}=[font=\tiny]
            \begin{axis}[
                xmin=-2.5, xmax=2.5,
                ymin=-1, ymax=3,
                axis lines=middle,
                xlabel={$x$},
                ylabel={$y$},
                xtick={-4,-2,0,2,4},
                ytick={-4,-2,0,2,4},
                legend style={at={(0.01,.8)},anchor=north west},
                width=5cm,
                height=4cm
            ]
                \addplot[blue, thick, domain=-4:4, samples=200] {max(0,x)};
                \addlegendentry{ReLU}
            \end{axis}
        \end{tikzpicture}\\ 
        (a) ReLU
    \end{minipage}
    \hfill %
    \begin{minipage}[b]{0.42\linewidth}
        \centering
        \begin{tikzpicture}
            \tikzstyle{every node}=[font=\tiny]
            \begin{axis}[
                xmin=-2.5, xmax=2.5,
                ymin=-1, ymax=3,
                axis lines=middle,
                xlabel={$x$},
                ylabel={$y$},
                xtick={-4,-2,0,2,4},
                ytick={-4,-2,0,2,4},
                legend style={at={(0,.8)},anchor=north west},
                width=5cm,
                height=4cm
            ]
                \addplot[blue, thick, domain=-4:4, samples=200] {x > 0 ? x : 0.3*x};
                \addlegendentry{Leaky ReLU}
                
                \addplot[red!20, opacity=0.5, domain=-4:0, samples=200, fill=red!20, fill opacity=0.5, area legend,
                ]
                    {0.3*x} \closedcycle;
                \addlegendentry{Relaxed Area}
            \end{axis}
        \end{tikzpicture}\\ 
        (b) Relaxed with Leaky ReLU
    \end{minipage}
    \hfill
    \begin{minipage}[b]{0.005\linewidth}
    \end{minipage}
    \caption{Relaxed area of our LeakySDP by Leaky ReLU function.}
    \label{fig:leaky_relu}
\end{figure}
\section{Methodology} \label{sec:methodology}

To address the interior-point vanishing problem, we designed five methods to relax the constraints to obtain more feasible interior points. Based on the observations found in~\Cref{sec:interior_point_vanishing}, we mainly explore the further relaxations regarding the ReLU equality constraint \eqref{eq:sdp_relax_relu} and the upper bound $\bm u_i$ in \Cref{eq:sdp_relax_bound}. 

\subsection{Relaxation of Equality Constraints: \textit{$\varepsilon$-SDP}}
We first simply relax the ReLU equality constraint~\eqref{eq:sdp_relax_relu} by allowing a tolerance $\varepsilon\ge 0$ as follows:
\begin{equation*}
\begin{alignedat}{3}
\min_{\bm P}~&\quad \bm{c}^{\top} \bm P\sqbra{\bm{x}_L} + c_0 \notag\\
\st~&\quad \text{\Cref{eq:sdp_ineq_nonneg,eq:sdp_ineq_relu,eq:sdp_relax_bound,eq:sdp_semidefinite}, and}\notag \\ 
&\quad -\varepsilon \cdot \bm 1_{n_i} \leq \diag{\bm{P}\sqbra{\bm{x}_{i+1} \bm{x}_{i+1}^{\top}} - \bm{W}_i \bm{P}\sqbra{\bm{x}_i \bm{x}_{i+1}^{\top}} }  - \bm{b}_i \odot \bm{P}\sqbra{\bm{x}_{i+1}} \leq \varepsilon\cdot \bm 1_{n_i} &\quad \paren{i \in \sqbra{L}}. \notag
\end{alignedat}  
\end{equation*}

We call this new relaxed incomplete verification \textit{$\varepsilon$-SDP}. This relaxation is simple but straightforwardly relaxes the ReLU equality constraint.

\subsection{SDP Relaxation with Leaky ReLU: \textit{LeakySDP}}
We also relax the ReLU constraint \eqref{eq:sdp_relax_relu} with the Leaky ReLU function, which is defined as:
\[
\LeakyRelu{x} = 
\begin{cases} 
x & \paren{x \geq 0}, \\
\alpha \cdot x & \paren{x < 0},
\end{cases}
\]
where \( \alpha \in (0, 1) \) is a small positive constant. Similar to the ReLU, we can describe the Leaky ReLU with linear constraints on $\bm P$.

Since the DNN to be verified uses the ReLU, we relax the ReLU activation with the Leaky ReLU as illustrated in \Cref{fig:leaky_relu}. We allow the verification problem to take the value in the area between the Leaky ReLU and the x-axis when $x \geq 0$. As it still contains the ReLU function part, this operation relaxes the original problem.
Formally, the constraints~\eqref{eq:sdp_ineq_nonneg},~\eqref{eq:sdp_ineq_relu}, and~
\eqref{eq:sdp_relax_relu} are replaced with the inequalities and equality constraints associated with the Leaky ReLU function as follows:
\begin{equation*}
\begin{alignedat}{3}
\min_{\bm P}~&\quad  \bm{c}^{\top} \bm P\sqbra{\bm{x}_L} + c_0 \\
\st~&\quad\text{Eqs.~(\ref{eq:sdp_relax_bound}), (\ref{eq:sdp_semidefinite}), and}\\
&\quad  \bm{P}\sqbra{\bm{x}_{i+1}} \geq\alpha\paren*{\bm W_i\bm{P}\sqbra{\bm{x}_i}+\bm{b}_i}&\quad  \paren{i \in \sqbra{L}},\\ 
&\quad 
  \bm{P}\sqbra{\bm{x}_{i+1}} \geq \bm{W}_i\bm{P}\sqbra{\bm{x}_i} +\bm{b}_i&\quad  \paren{i \in \sqbra{L}},\\
&\quad  \operatorname{diag}\paren*{\bm{P}\sqbra{\bm{x}_{i+1} \bm{x}_{i+1}^{\top}}
  -\bm{W}_i \,\bm{P}\sqbra{\bm{x}_i \bm{x}_{i+1}^{\top}}}-\bm{b}_i \odot \bm{P}\sqbra{\bm{x}_{i+1}}
  \leq\bm{0}&\quad  \paren{i \in \sqbra{L}}.
\end{alignedat}   
\end{equation*}

We refer to this formulation as \textit{LeakySDP}.

\subsection{Diagonal Scaling: \textit{D-Scale}}
We adopt a common approach to solve an ill-conditioned SDP, called scaling~\cite{wright2006numerical}, which aims to improve the condition number of the matrices involved in SDP to reduce sensitivity to numerical errors. As finding the optimal scaling is almost the same as solving the original SDP, several heuristic approaches have been designed. 
Diagonal scaling~\cite{gao2023scalable} is one of the most commonly used scaling methods due to its low computational overheads. Specifically, the diagonal scaling can be written as follows:
\begin{proposition}
Let $\bm D\succ \bm O$ be a diagonal matrix. 
Then, the standard primal SDP~\eqref{eq:sdp_primal} can be equivalently transformed as follows:
\begin{align*}
\min_{\bm X} &\quad \tr{\paren{\bm{D} \bm{C} \bm{D}} \cdot\paren{\bm{D}^{-1} \bm{X} \bm{D}^{-1}}}\\ 
\st &\quad \tr{\paren{\bm{D} \bm{A}_j \bm{D}}\cdot \paren{ \bm{D}^{-1} \bm{X} \bm{D}^{-1}} }= b_j \quad (j = 1, \ldots, m), \\
&\quad \bm{D}^{-1} \bm{X} \bm{D}^{-1} \succeq \bm{O}.
\end{align*}
\end{proposition}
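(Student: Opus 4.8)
The plan is to recognize this as a routine change of variables rather than a substantive optimization argument, and to carry it out by exhibiting a bijection between the feasible regions of the two problems that leaves the objective value invariant. First I would introduce the new variable $\bm Y \coloneqq \bm D^{-1} \bm X \bm D^{-1}$. Since $\bm D \succ \bm O$ is a positive definite diagonal matrix, it is symmetric and invertible, so the map $\bm X \mapsto \bm D^{-1} \bm X \bm D^{-1}$ is a linear bijection on $\mathcal S^n$ with inverse $\bm Y \mapsto \bm D \bm Y \bm D$; in particular $\bm Y$ is symmetric whenever $\bm X$ is, so the substitution genuinely stays inside the space of symmetric matrices.

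Next I would verify that the objective and each equality constraint are literally unchanged under this substitution, using only the cyclic invariance of the trace. For the objective, $\tr{\paren{\bm D \bm C \bm D}\cdot\paren{\bm D^{-1} \bm X \bm D^{-1}}} = \tr{\bm D \bm C \bm X \bm D^{-1}} = \tr{\bm C \bm X}$, where the final equality cancels $\bm D^{-1}\bm D$ after cycling $\bm D^{-1}$ to the front. The identical computation with $\bm A_j$ in place of $\bm C$ gives $\tr{\paren{\bm D \bm A_j \bm D}\cdot\paren{\bm D^{-1} \bm X \bm D^{-1}}} = \tr{\bm A_j \bm X}$, so each constraint $\tr{\bm A_j \bm X} = b_j$ is preserved exactly.

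For the conic constraint I would invoke the fact that a congruence transformation by a nonsingular matrix preserves positive semidefiniteness (Sylvester's law of inertia). Concretely, for any $\bm z \in \mathbb R^n$ we have $\bm z^\top \paren{\bm D^{-1} \bm X \bm D^{-1}} \bm z = \paren{\bm D^{-1} \bm z}^\top \bm X \paren{\bm D^{-1} \bm z}$, and since $\bm D^{-1}$ is nonsingular, $\bm D^{-1} \bm z$ ranges over all of $\mathbb R^n$ as $\bm z$ does; hence $\bm D^{-1} \bm X \bm D^{-1} \succeq \bm O$ if and only if $\bm X \succeq \bm O$. Combining the three points, the substitution sends feasible points of the original problem bijectively to feasible points of the transformed problem while preserving the objective value, so the two problems attain the same optimal value and their optimizers correspond under $\bm X \leftrightarrow \bm D^{-1} \bm X \bm D^{-1}$.

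The argument involves no real obstacle; the only points requiring mild care are to confirm that the change of variables is a bijection on the symmetric matrices rather than merely on all matrices, and to note that it is \emph{congruence} (a two-sided product by $\bm D^{-1}$ and its transpose), not similarity, that governs the semidefinite constraint. Once these are observed, every step reduces to the cyclic property of the trace.
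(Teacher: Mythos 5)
Your proof is correct and is the standard congruence/change-of-variables argument: the trace-cyclicity computation for the objective and equality constraints, plus preservation of positive semidefiniteness under congruence by the nonsingular symmetric matrix $\bm D^{-1}$, together establish the claimed equivalence. The paper states this proposition without proof (treating it as a routine scaling identity from the cited literature), so there is no alternative argument to compare against; your write-up supplies exactly the expected justification, including the correct observation that it is congruence rather than similarity that governs the semidefinite constraint.
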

As a scaling diagonal matrix, we use $\bm D$ whose diagonal elements $\diag{\bm D} = \paren{1, \bm u_1^\top, \bm u_2^\top, \ldots, \bm u_L^\top}^\top$. We note that division by zero does not occur since we remove neurons with  $u_{ij} \leq 0$ in the preprocessing as discussed in~\Cref{sec:preprocessing}.

\subsection{Neural Network Weight Scaling: \textit{W-Scale}}
\label{sec:w-scale}

We further design a new scaling method by leveraging the findings in~\Cref{thm:min_eign}, which shows that the minimum eigenvalue of any feasible solution \(\bm P\) to the SDP relaxation is constrained by the smallest norm of the extended row vector \(\widetilde{\bm{W}}_{i}\paren{j,\colon}\). 
Since a neuron \(j\) in some layer \(i\) has a particularly small norm \(\|\widetilde{\bm{W}}_{i}(j,\colon)\|_2\) triggers the interior-point vanishing problem, we rescale the parameters in each layer to avoid excessively small row norms. 
Let $\check{w}_i = \min_{j=1,\dots,n_i}\|\widetilde{\bm{W}}_i(j,\colon)\|_2$ be the minimum $L_2$ norm of the row vectors of $\widetilde{\bm{W}}_i$. 
According to $\check{w}_i$, we scale the weight matrix $\widetilde{\bm{W}}_i$ of intermediate layers $i\in \sqbra{L-1}$ as $
\widetilde{\bm{W}}'_i
\coloneqq \widetilde{\bm{W}}_i/\check{w}_i ~\paren{i \in \sqbra{L-1}}, 
$
and for the final layer as 
$
\widetilde{\bm{W}}'_L
\coloneqq
\widetilde{\bm{W}}_L\cdot 
\prod_{i=0}^{L-1}\check{w}_i.    
$
Note that this scaling makes the final output given by the scaled network the same as the original one.
We refer to this approach as \emph{W-Scale}.

\begin{table}[t]
    \centering
    \caption{Comparison of success rates for SDP verification problems solved by our five proposed methods and two prior methods.} \label{tab:5-1}
    \begin{tabular}{c|ccccc|cc}
        \toprule
        $L$ & \multicolumn{5}{c|}{Proposed Methods} & \multicolumn{2}{c}{Prior Methods} \\
        \cmidrule{2-8}
         & \(\varepsilon\)-SDP & LeakySDP & B-Remove  & D-Scale & W-Scale & LayerSDP & SDP-IP \\
        \midrule
        2  & \textbf{100\%} & \textbf{100\%} & \textbf{100\%} & \textbf{100\%} & \textbf{100\%} & \textbf{100\%} & \textbf{100\%} \\
        4  & \textbf{100\%} & \textbf{100\%} & \textbf{100\%} & \textbf{100\%} & \textbf{100\%} & \textbf{100\%} & \textbf{100\%} \\
        6  & \textbf{100\%} & 98\%  & \textbf{100\%} & \textbf{100\%}  & \textbf{100\%}  & \textbf{100\%}  & \textbf{100\%}  \\
        8  & \textbf{100\%} & 82\%  & \textbf{100\%} & 88\%  & 88\%  & 82\%  & 98\%  \\
        10 & \textbf{100\%}  & 4\%   & \textbf{100\%} & 8\%   & 8\%   & 2\%  & 14\%   \\
        12 & 78\%  & 0\%   & \textbf{100\%} & 0\%   & 0\%   & 0\%   & 0\%   \\
        16 & 24\%   & 0\%   & \textbf{66\%} & 0\%   & 0\%   & 0\%   & 0\%   \\
        \bottomrule
    \end{tabular}
    \label{tab:small_diff_comparison}
\end{table}

\begin{figure}[t]
    \centering
    \includegraphics[width=.7\linewidth]{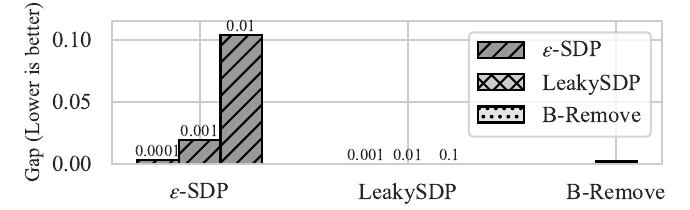}
    \caption{Average gap between the optimal objective values of our methods and the LayerSDP's optimal. The numbers above each bars are $\varepsilon$ for $\varepsilon$-SDP and $\alpha$ for LeakySDP.}
    \label{fig:LayerSDP_gap}
\end{figure}
\subsection{Removal of Bound Constraints: \textit{B-Remove}}

To relax the upper bound constraints highly related to the interior-point vanishing, we simply remove the constraints from the intermediate layers since the SDP relaxation of ReLU does not require such bounds as shown in \Cref{sec:sdp_relax} and Appendix~\ref{sec:prelim_analysis}. The bounds are introduced as valid constraints, which typically help higher converging in many applications. We refer to this approach as \emph{B-Remove}.

\section{Evaluation}\label{sec:evaluation}

We evaluate our five methods designed in~\Cref{sec:methodology}, regarding the capability to address the interior-point vanishing problem and the relaxation quality.

\subsection{Experimental Setup}

We trained 35 fully connected DNN models on the MNIST dataset: 
We set the number of layers $L\in \{2, 4,\dots , 16\}$ and tried 5 different seeds for each $L$. 
The number of neurons in each layer was fixed to $n_i=20~\paren{i=1,\dots,L}$. 
We use the original resolution, i.e., \(28 \ {\rm pixel} \times 28 \ {\rm pixel}\), and use the normal version of SDPA~\cite{yamashita2010high}, not the high-precision SDPA-GMP as used in~\Cref{sec:interior_point_vanishing}, to handle the original image size. SDPA may introduce worse numerical stability, but this can allow us to conduct larger-scale experiments and more clearly observe the improved numerical stability from our designed methods. We discuss the choice of SDP solver in~\Cref{sec:sdp_solver}. 
We judge that an SDP problem is successfully solved if the primal-dual objective gap is lower than $10^{-6}$.
We randomly selected 10 images from MNIST and randomly chose targets for the labels. 
We compare our proposed methods, denoted as \(\varepsilon\)-SDP (with parameter \(\varepsilon = 0.01\)) and LeakySDP (with parameter \(\alpha = 0.01\)), against prior SDP-based verification methods: SDP-IP~\cite{SDP-IP}, the original SDP-based verification method, and LayerSDP~\cite{batten2021efficient}, an improved SDP-based verification method that utilizes the sparse structure in the layer-wise structure of DNN and the constraints used in the LP-based verification~\cite{ehlers2017formal}.

\subsection{Evaluation on Interior-Point Vanishing}
\label{sec:eval_ipv}

We evaluate whether our methods can effectively address the interior-point vanishing problem. 
\Cref{tab:small_diff_comparison} shows the success rates for SDP verification problems solved by our five proposed methods and two existing methods. 
As shown, the existing methods (LayerSDP and SDP-IP) suffer from this problem starting at \(L=8\), and fail almost entirely at \(L=10\), as discussed in~\Cref{sec:empirical_analysis}.

In contrast, our \(\varepsilon\)-SDP and \textbf{B-Remove} show substantial improvements. 
These two methods directly address the constraints we identified as a potential source of the interior-point vanishing problem in~\Cref{sec:theoritical_analysis} and \Cref{sec:prelim_analysis}. 
Notably, B-Remove achieves a surprisingly large improvement despite its simplicity—just removing upper and lower bounds. 
We further discuss this result in~\Cref{sec:layer_bound}.

The other three methods (LeakySDP, D-Scale, and W-Scale), however, do not significantly improve the success rate. 
For LeakySDP, it can be noted that while it relaxes the part where the input of the ReLU activation function is non-positive in the negative direction, it has little effect on the positive part. 
Due to this property, the effectiveness of the method is limited when positive activations are dominant.  
In addition, nodes with small absolute values may contribute significantly to the primal-dual gap, but applying Leaky-ReLU to such nodes results in only minor changes, limiting the method’s impact.
D-Scale and W-Scale aimed to improve the ill-condition of the SDP by scaling, which is commonly used to improve the convergence of SDP. 
However, our experiments suggest that such generic approaches were not effective for addressing the interior-point vanishing problem.

\subsection{Evaluation on Relaxation Quality}
\label{sec:eval_quality}

We then evaluate the relaxation quality of our methods. 
Since all our proposed methods relaxed the problem more than SDP-IP and LayerSDP,  our methods generally have less verification power than these existing methods, as long as the existing methods can solve the problem. 
To quantify the verification power losses, we calculate the gap between the optimal objective values obtained by our methods and LayerSDP, which has a tighter or at least eaqual relaxation than SDP-IP. 
A small objective gap implies that although our methods yield looser relaxations, they still maintain comparable verification power to LayerSDP.

\Cref{fig:LayerSDP_gap} shows the average gap between the optimal objective function values of our methods and the LayerSDP's ones. 
The average is only calculated over the instances for which all methods  converge to their optimal. \(\varepsilon\)-SDP shows relatively high verification power loss since it directly relaxes the ReLU constraints, although \(\varepsilon\)-SDP can mitigate the interior-point vanishing problem as in~\Cref{sec:eval_ipv}. LeakySDP does not have significant power losses. Considering LeakySDP shows almost similar performance to LayerSDP in~\Cref{sec:eval_ipv}, the relaxation in the negative area does not look effective. In contrast, B-Remove does not cause a major verification power loss. This is because the SDP relaxation of ReLU does not require the upper and lower bounds other than for the input layer as shown in \Cref{eq:qcqp}, i.e., $\bm l_i$ and $\bm u_i$ for $i > 0$. The lower and upper bounds of the input layer describe the attacker's norm budget $\rho$ in \Cref{prob:verification}, not used to relax ReLU. Nevertheless, prior approaches adopt each layer's upper and lower bounds by following the same custom in the traditional LP-based verification~\cite{ehlers2017formal}. We further discuss this in~\Cref{sec:layer_bound}.

\section{Discussion}\label{sec:discussion}

\subsection{Backfire of Layer Bounds} \label{sec:layer_bound}

We find that the upper and lower bounds of each layer, i.e., $\bm l_i$ and $\bm u_i$ for $i > 0$ even amplify the interior-point vanishing problem (\Cref{sec:eval_ipv}) even though they does not have a major impact on improving the relaxation quality (\Cref{sec:eval_quality}). For the SDP-based DNN verification, the upper and lower bounds are not mandatory to formulate the problem. Nevertheless, prior approaches adopt them because the SDP formulation is extended from the traditional LP-based verification~\cite{ehlers2017formal}, which requires the bounds to form the triangle area consisting of the three points at the origin, lower bound, and upper bound, to relax the ReLU function. Generally, more valid linear constraints are considered to be beneficial in improving the verification quality, e.g., LayerSDP introduces constraints inspired by the triangle area in the LP-based verification. However, this assumption does not always hold for SDP. Unnecessary valid constraints may remove the feasible interior points and lead to numerical instability. The increase in the number of linear constraints also introduces additional computational costs. Even if it works for a toy shallow model, we recommend validating the effectiveness on the deeper model at least with 10 layers, ideally more than 20 layers.

\subsection{Applicability of Non-Interior-Point Methods} \label{sec:non_ipm_solver}

We predominantly used the interior-point method, more specifically, the primal-dual interior-point path-following method, to solve the SDP problems in this study. While prior work~\cite{dathathri2020enabling} demonstrates that their first-order method outperforms interior-point methods in DNN verification,  we cannot employ first-order methods~\cite{boyd2011distributed, sun2020sdpnal, dathathri2020enabling} for our research as they cannot provide theoretical guarantees of optimality. Since first-order methods depend on heuristic stopping criteria and step size adjustments, we cannot conclusively determine the existence of interior-point vanishing, even if the first-order methods say they converge to some solutions. Furthermore, the interior-point methods have generally higher numerical stability and accuracy compared to the first-order methods, as the interior-point method can utilize second-order Hessian information~\cite{lin2021admm, tu2014practical}, and thus the first-order methods should suffer from the lack of feasible interior points more significantly. We note that the numerical instability introduced by the lack of feasible interior points is not a specific problem in the interior-point method, but a more fundamental issue in the SDP problem itself. As our research primarily aims to identify and diagnose the interior-point vanishing problem in SDP-based DNN verification rather than directly developing a practical method, we only used the interior-point method.

\subsection{Choice of SDP Solver} \label{sec:sdp_solver}

We use SDPA~\cite{yamashita2012latest} and its high-precision version SDPA-GMP~\cite{SDPA-GMP} with multiple-precision arithmetic. The main reason for the choice is the availability of multiple-precision arithmetic. We also tried MOSEK~\cite{mosek} solver, but we finally decided not to use MOSEK due to our specific needs. MOSEK has a parameter, intpntCoTolNearRel, which allows for violating the constraint at a certain level. We observed a case where the equality constraints were not satisfied to the order of 10e-6. Furthermore, MOSEK's parameter semidefiniteTolApprox allows for violations of the semidefinite constraint, potentially permitting negative eigenvalues. In contrast, we confirmed that SDPA and SDPA-GMP strictly enforce semidefinite constraints based on their open-source implementations. Since MOSEK is closed-source, we could not diagnose these issues by ourselves, which ultimately led us to choose not to use MOSEK for this study. 

\subsection{Applicability of Facial Reduction}

This study identifies the interior-point vanishing problem, describing the fact that the SDP-based DNN verification suffers from a lack of feasible interior points (i.e., strict feasibility) when the depth of the DNN increases. Meanwhile, the numerical instability caused by the lack of strict feasibility is a well-known problem in the SDP area~\cite{lourencco2016structural, sekiguchi2021perturbation}. One of the most fundamental approaches to addressing the issue is facial reduction~\cite{permenter2018partial, hu2023facial},  which aims to reduce the size of the original SDP problem, ultimately transforming it into another equivalent problem with a lower dimension where strictly feasible solutions are available. Facial reduction can be iteratively applied until a feasible interior point exists~\cite{waki2013facial}. However, facial reduction techniques do not always guarantee the achievement of a problem with strict feasibility. Furthermore, applying facial reduction itself requires iteratively solving SDP problems. This procedure may even introduce another numerical instability. As facial reduction is unlikely to yield a practical level,
we did not explore it in this research. Practical facial reduction techniques specializing in the DNN verification problem structure can be a potential future research.
\section{Conclusion}\label{sec:conclusion}

In this study, we discover a fundamental challenge in the convergence of SDP-based DNN verifiers, interior-point vanishing, and we are the first to evaluate the impact of strict feasibility on SDP-based DNN verification against the DNN models with practical depth, which is not covered by prior work. We conduct a theoretical and empirical analysis of the interior-point vanishing and find the potential root causes of the problem.
We newly designed five approaches to improve the feasibility of SDP-based verification problems and demonstrate that our methods can successfully solve 88\% of the problems that could not be solved by existing methods, accounting for 41\% of the total.
We further find that the valid constraints for the lower and upper bounds for each ReLU unit are traditionally inherited from prior work without solid reasons, but are actually not only not beneficial but also even harmful to the problem's feasibility. This work provides valuable insights into the fundamental challenges of SDP-based DNN verification, contributing to the development of more reliable and secure systems with DNNs.

\bibliography{main}
\bibliographystyle{plain}

\appendix
\section*{Appendix}

\section{Detailed Proofs} \label{sec:proofs}

\subsection{Proof of \Cref{prop:strong_feasibility}}
\label{sec:proof_strong_feasibility}

\begin{proof}
Let $(\bm X^*, \lambda^*)$ be an optimal solution to the problem~\eqref{eq:sdp_eign}. 
Suppose that $\lambda^* > 0$, we have $\bm X^* + \lambda^* \bm I \succ \bm O$. 
Also, $\bm X^* + \lambda^* \bm I$ is a feasible solution to the problem~\eqref{eq:sdp_primal}, which means the original problem is strictly feasible.

On the other hand, suppose that the original problem has a strictly feasible solution $\bm X'\succ \bm O$. 
Since $\bm X' \succ \bm O$, we have $\bar{\lambda}=\lambda_{\min}(\bm X') > 0$.  
By construction, $\bar{\bm X} = \bm X' - \bar{\lambda}\bm I \succeq \bm O$ since all eigenvalues of $\bar{\bm X}$ remain nonnegative.  
Then, since we have $\tr{\bm A_j  \paren{\bar{\bm X} + \bar{\lambda} \bm I}} = \tr{\bm A_j  \bm X'}= b_j ~(j = 1, \dots, m)$ and $\bar{\bm X} \succeq \bm O$,  
$\paren{\bar{\bm X}, \bar{\lambda}}$ is a feasible solution to the problem~\eqref{eq:sdp_eign}, and its objective value is $\lambda_{\min}\paren{\bm X'}>0$.
Thus, we see that the optimal value of the problem~\eqref{eq:sdp_eign} is positive, which completes the proof.
\end{proof}

\subsection{Proof of \Cref{lem:trace}}
\label{sec:proof_trace}

\begin{proof}
We prove this by induction for $i\in \sqbra{L}$. 
Let us consider the initial case $i=0$. 
Since $\bm P\succeq \bm O$, we have
\begin{equation*}
	\begin{pmatrix}
		1 & \bm P\sqbra{\bm x_0^\top } \\
		\bm P\sqbra{\bm x_0} &\bm  P\sqbra{\bm x_0 \bm x_0^\top}
	\end{pmatrix} \succeq \bm O,
\end{equation*}
which implies 
\begin{equation*}
	\norm{\bm P\sqbra{\bm x_0}}^2_2\le \tr{\bm P\sqbra{\bm x_0 \bm x_0^\top}}. 
\end{equation*}

Recalling $\bm{l}_0=\bar{\bm{x}}-\varepsilon \bm{1}$ and $\bm{u}_0=\bar{\bm{x}}+\varepsilon \bm{1}$, from the constraint~\eqref{eq:sdp_relax_bound}, we have
\begin{align*}
	\tr{\bm{P}\sqbra{\bm{x}_0 \bm{x}_0^\top}} & = \paren{\bm{l}_0+\bm{u}_0}^\top \bm{P}\sqbra{\bm{x}_0} - \bm{l}_0^\top  \bm{u}_0\\
	&\le 2\norm{\bar{\bm{x}}}_2\norm{\bm{P}\sqbra{\bm{x}_0}}_2 - \norm{\bar{\bm{x}}}_2^2 + \varepsilon^2\cdot n_0\\
	&\le 2\norm{\bar{\bm{x}}}_2\sqrt{\tr{\bm{P}\sqbra{\bm{x}_0 \bm{x}_0^\top}}}  - \norm{\bar{\bm{x}}}_2^2 + \varepsilon^2\cdot n_0.
\end{align*} 
Rearranging the above inequality gives 
\begin{equation*}
	\paren*{\sqrt{\tr{\bm{P}\sqbra{\bm{x}_0 \bm{x}_0^\top}}} - \norm{\bar{\bm{x}}}_2}^2 \le  \varepsilon^2\cdot n_0.	
\end{equation*}
Thus, by taking the square root of both sides, we obtain 
\begin{equation*}
	\tr{\bm{P}\sqbra{\bm{x}_0 \bm{x}_0^\top}} \le \paren{\norm{\bar{\bm{x}}}_2+\varepsilon \sqrt{n_0}}^2 = T_0,
\end{equation*}
which ensures that \Cref{eq:trace_bound} holds for $i=0$.

Next, let $i\in \sqbra{L-1}$, and assuming that \Cref{eq:trace_bound} holds for all indices up to and including $i$, we show that it also holds for $i+1$.
Let $\bm{P} \succeq \bm{O}$ be any feasible solution for the problem~\eqref{eq:sdp_relax}. 
Then, the following $3\times 3$ principal submatrix is positive semidefinite:
\begin{equation}\label{eq:3x3}
	\begin{pmatrix}
		1 & \bm{P}\sqbra{\bm{x}_i^\top} & \bm{P}\sqbra{\bm{x}_{i+1}^\top} \\
		\bm{P}\sqbra{\bm{x}_i} & \bm{P}\sqbra{\bm{x}_i \bm{x}_i^\top} & \bm{P}\sqbra{\bm{x}_i \bm{x}_{i+1}^\top} \\
		\bm{P}\sqbra{\bm{x}_{i+1}} & \bm{P}\sqbra{\bm{x}_{i+1} \bm{x}_i^\top} & \bm{P}\sqbra{\bm{x}_{i+1} \bm{x}_{i+1}^\top}
	\end{pmatrix} \succeq \bm{O}.
\end{equation}
From the induction hypothesis, we have $\tr{\bm{P}\sqbra{\bm{x}_i \bm{x}_i^\top}}\le T_i$, which implies $\lambda_{\max}\paren{\bm{P}\sqbra{\bm{x}_i \bm{x}_i^\top}}\le T_i$. 
Thus, by replacing the first $2\times 2$ principal submatrix in the left-hand side of~\eqref{eq:3x3} with $\paren{1+T_i}\bm{I}$, we obtain
\begin{equation}\label{eq:3x3_2}
	\paren*{
		\begin{array}{cc|c}
                & & \bm{P}\sqbra{\bm{x}_{i+1}^\top} \\
                \multicolumn{2}{c|}{\smash{\raisebox{.5\normalbaselineskip}{$(1+T_i)\bm{I}$}}} & \bm{P}\sqbra{\bm{x}_i \bm{x}_{i+1}^\top}  \\  \hline
			\bm{P}\sqbra{\bm{x}_{i+1}} & \bm{P}\sqbra{\bm{x}_{i+1} \bm{x}_i^\top} & \bm{P}\sqbra{\bm{x}_{i+1} \bm{x}_{i+1}^\top}
		\end{array}
	}\succeq \bm{O}.
\end{equation}

Let us define the concatenated matrix as 
\begin{equation*}\label{eq:extended}
	\bm{\widetilde{P}}_{i+1} = \begin{pmatrix}
		\bm{P}\sqbra{\bm{x}_{i+1}^\top} \\
		\bm{P}\sqbra{\bm{x}_i \bm{x}_{i+1}^\top}
	\end{pmatrix}.
\end{equation*} 
By taking the Schur complement of $\paren{1+T_i}\bm{I}$ of the left-hand side in~\cref{eq:3x3_2}, we have
\begin{equation*}
	\bm{P}\sqbra{\bm{x}_{i+1} \bm{x}_{i+1}^\top} - \frac{1}{1+T_i} \bm{\widetilde{P}}_{i+1}\bm{\widetilde{P}}_{i+1}^\top \succeq \bm{O},
\end{equation*}
which implies
\begin{equation}\label{eq:trace-bound}
	\tr{\bm{P}\sqbra{\bm{x}_{i+1} \bm{x}_{i+1}^\top}} \ge \frac{1}{1+T_i}\norm{\bm{\widetilde{P}}_{i+1}}_F^2.
\end{equation}

On the other hand, from the constraint~\eqref{eq:sdp_relax_relu} and the definitions of $\bm{\widetilde{P}}_{i+1}$ and $\bm{\widetilde{W}}_{i}$, we have
\begin{align}
	\tr{\bm{P}\sqbra{\bm{x}_{i+1} \bm{x}_{i+1}^\top}}  
		&= \tr{\bm{\widetilde{W}}_{i} \bm{\widetilde{P}}_{i+1}}\notag \\
		&\le \norm{\bm{\widetilde{W}}_{i}}_F\norm{\bm{\widetilde{P}}_{i+1}}_F \label{eq:trace-bound-2}.
\end{align}
Combining~\eqref{eq:trace-bound} and~\eqref{eq:trace-bound-2}, we derive
\begin{equation*}
	\norm{\bm{\widetilde{P}}_{i+1}}_F \le (1+T_i)\norm{\bm{\widetilde{W}}_{i}}_F, 
\end{equation*}
which leads to 
\begin{equation*}
	\tr{\bm{P}\sqbra{\bm{x}_{i+1} \bm{x}_{i+1}^\top}} \le (1+T_i)\norm{\bm{\widetilde{W}}_{i}}_F^2,
\end{equation*}
which ensures that \Cref{eq:trace_bound} holds for $i+1$ and completes the proof.
\end{proof}

\section{Conversion~\Cref{prop:strong_feasibility} into Standard SDP Form} \label{sec:rewrite_strong_feasibility}

We introduce nonnegative variables \(\lambda_1,\lambda_2 \ge 0\) and decompose
\begin{equation*}
  \lambda \;=\; \lambda_1 \;-\;\lambda_2.
\end{equation*}
Using this decomposition, the objective function becomes one of maximizing 
\(\lambda_1 - \lambda_2\). We then gather the variable matrices into the following block-diagonal matrix:
\begin{equation*}
  \widetilde{\bm{X}}
  \;=\;
  \begin{pmatrix}
    \bm{X} & 0 & 0 \\
    0 & \lambda_1 & 0 \\
    0 & 0 & \lambda_2
  \end{pmatrix}
  \quad (\widetilde{\bm{X}} \succeq \bm O),
\end{equation*}
thereby simultaneously representing 
\(\bm{X} \succeq \bm O,\ \lambda_1 \ge 0,\ \lambda_2 \ge 0.\)

Next, to express the objective function \(\lambda_1 - \lambda_2\), we can define the matrix 
\(\widetilde{\bm{C}}\) as follows:
\begin{equation*}
  \widetilde{\bm{C}}
  \;=\;
  \begin{pmatrix}
    0 & 0 & 0 \\
    0 & 1 & 0 \\
    0 & 0 & -1
  \end{pmatrix}
  \quad \Longrightarrow \quad
  \tr{\widetilde{\bm{C}} \widetilde{\bm{X}}}
  \;=\;
  \lambda_1 \;-\;\lambda_2.
\end{equation*}

Meanwhile, each constraint
\begin{equation*}
  \tr{\bm{A}_j\cdot  \paren{\bm{X} + \paren{\lambda_1 - \lambda_2}\bm{I}}} 
  \;=\; b_j
\end{equation*}
can be written as
\begin{equation*}
  \tr{\bm{A}_j \bm{X}}
  \;+\;
  \paren{\lambda_1 - \lambda_2}\,\tr{\bm{A}_j  \bm{I}}
  \;=\;
  b_j,
\end{equation*}
where \(\paren{\bm{A}_j  \bm{I}} = \tr{\bm{A}_j}\) is a constant. 
Defining
\begin{equation*}
  \widetilde{\bm{A}}_j 
  =
  \begin{pmatrix}
    \bm{A}_j & 0 & 0 \\
    0 & \tr{\bm{A}_j} & 0 \\
    0 & 0 & -\,\tr{\bm{A}_j}
  \end{pmatrix} \quad \Longrightarrow \quad
  \tr{\widetilde{\bm{A}}_j  \widetilde{\bm{X}}}
  =
  b_j,
\end{equation*}
we see that the original problem can be rewritten in the standard SDP form with 
the matrix variable \(\widetilde{\bm{X}} \succeq \bm O\), as
\begin{align*}
  \max_{\widetilde{\bm X}}\quad & \tr{\widetilde{\bm{C}} \widetilde{\bm{X}}}
  \quad \nonumber\\
  \text{s.t.}
  \quad
   &\tr{\widetilde{\bm{A}}_j \widetilde{\bm{X}} } =  b_j,
  \quad \paren{j=1,\dots,m},\\ 
  &\widetilde{\bm X}\succeq \bm O.
\end{align*}

\begin{table}[t]
\centering
\caption{Average of the optimal values across the solved instances for the different layer sizes for Problem A and B.}
\begin{tabular}{rrr}
\toprule
\textbf{$L$} & \textbf{Problem A} & \textbf{Problem B} \\
\midrule
2 & 7.17$\pm$7.24E-05 & 4.93$\pm$1.13E-05 \\
4 & 5.53$\pm$5.31E-06 & 3.86$\pm$1.26E-06 \\
6 & 1.55$\pm$1.57E-07 & 1.92$\pm$7.52E-07 \\
8 & 3.75$\pm$3.69E-09 & 1.56$\pm$1.12E-08 \\
10 & 9.93$\pm$8.48E-11 & 8.55$\pm$9.09E-10 \\
12 & 8.88$\pm$6.14E-13 & $-$4.05$\pm$3.53E-10 \\
16 & 6.38$\pm$7.81E-16 & $-$1.07$\pm$0.73E-09 \\
\bottomrule
\end{tabular}
\label{table:eigenvalue_results}
\end{table}

\section{Empirical Analysis of Constraint Impacts on the Interior-Point Vanishing} \label{sec:prelim_analysis}

To identify the constraints causing the interior-point-vanishing problem, we just directly relaxed the MEM problem~\eqref{eq:sdp_eign} by removing each constraint.
We consider the following two problems:
\paragraph{Problem A: Removal of equality constraints related to ReLU}
\begin{equation*} \label{eq:sdp_2}
\begin{alignedat}{3}
\min_{\bm{P}} \ & \quad\bm{c}^{\top} \bm{P}\sqbra{\bm{x}_L} + c_0 \\
\text{s.t.} \ &\quad \bm{P}\sqbra{\bm{x}_{i+1}} \geq \bm{0} &\quad\paren{i \in\sqbra{L}}, \\ 
&\quad \bm{P}\sqbra{\bm{x}_{i+1}} \geq \bm{W}_i \bm{P}\sqbra{\bm{x}_i} + \bm{b}_i &\quad  \paren{i \in\sqbra{L}}, \\
&\quad \diag{\bm{P}\sqbra{\bm{x}_i \bm{x}_i^{\top}}} - \paren{\bm{l}_i + \bm{u}_i} \odot \bm{P}\sqbra{\bm{x}_i} 
 + \bm{l}_i \odot \bm{u}_i \leq \bm{0}&\quad  \paren{i \in\sqbra{L}},
\\
&\quad \bm{P}\sqbra{1} = 1,~\bm{P} \succeq \bm{O}.
\end{alignedat}
\end{equation*}
\paragraph{Problem B: Applying upper and lower bound constraints only to the input layer} 
\begin{equation*} \label{eq:sdp_3}
\begin{alignedat}{3}
\min_{\bm{P}} &\quad \bm{c}^{\top} \bm{P}\sqbra{\bm{x}_L} + c_0 \\
\text {s.t.} &\quad  \bm{P}\sqbra{\bm{x}_{i+1}} \geq \bm{0} &\quad \paren{i \in\sqbra{L}}, \\
&\quad \bm{P}\sqbra{\bm{x}_{i+1}} \geq \bm{W}_i \bm{P}\sqbra{\bm{x}_i} + \bm{b}_i&\quad \paren{i \in\sqbra{L}}, \\
&\quad \diag{\bm{P}\sqbra{\bm{x}_{i+1} \bm{x}_{i+1}^{\top}} - \bm{W}_i \bm{P}\sqbra{\bm{x}_i \bm{x}_{i+1}^{\top}}} - \bm{b}_i \odot \bm{P}\sqbra{\bm{x}_{i+1}} = \bm{0}&\quad \paren{i \in\sqbra{L}}, \\
&\quad  \diag{\bm{P}\sqbra{\bm{x}_0 \bm{x}_0^{\top}}} - \paren{\bm{l}_0 + \bm{u}_0} \odot \bm{P}\sqbra{\bm{x}_0} + \bm{l}_0 \odot \bm{u}_0 \leq \bm{0},\\
&\quad  \bm{P}\sqbra{1} = 1,~\bm{P} \succeq \bm{O}.
\end{alignedat}
\end{equation*}

\begin{proposition} \label{prop:pro_a}
The feasible region of Problem A is bounded.
\end{proposition}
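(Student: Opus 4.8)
The plan is to exploit that a positive semidefinite matrix with uniformly bounded diagonal is itself bounded, and to show that the bound constraints retained in Problem A—these are exactly the constraints \eqref{eq:sdp_relax_bound}, which survive the removal of the ReLU equality constraint \eqref{eq:sdp_relax_relu}—pin down every diagonal entry of any feasible $\bm{P}$. First I would reduce the claim to a statement about diagonals only: for $\bm{P}\succeq\bm{O}$ the eigenvalues are nonnegative, so $\norm{\bm{P}}_F=\sqrt{\sum_k\lambda_k^2}\le\sum_k\lambda_k=\tr{\bm{P}}=\sum_k\paren{\bm{P}}_{kk}$, and moreover each off-diagonal entry obeys $\abs{\paren{\bm{P}}_{kl}}^2\le\paren{\bm{P}}_{kk}\paren{\bm{P}}_{ll}$ via the $2\times2$ minor. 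Hence it suffices to produce a uniform upper bound, independent of the feasible point, on every diagonal entry $\paren{\bm{P}\sqbra{\bm{x}_i \bm{x}_i^\top}}_{jj}$ (the entry $\bm{P}\sqbra{1}=1$ being already fixed).

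Next, I would fix a layer index $i$ and neuron $j$ and write $d_{ij}\coloneqq\paren{\bm{P}\sqbra{\bm{x}_i \bm{x}_i^\top}}_{jj}$ and $p_{ij}\coloneqq\paren{\bm{P}\sqbra{\bm{x}_i}}_j$. The principal submatrix of $\bm{P}$ on the rows and columns indexed by the constant $1$ and the $(i,j)$ coordinate satisfies $\begin{pmatrix}1 & p_{ij}\\ p_{ij} & d_{ij}\end{pmatrix}\succeq\bm{O}$, giving $p_{ij}^2\le d_{ij}$, while PSD-ness of $\bm{P}$ gives $d_{ij}\ge 0$. Read coordinate-wise, the bound constraint \eqref{eq:sdp_relax_bound} states $d_{ij}\le\paren{l_{ij}+u_{ij}}p_{ij}-l_{ij}u_{ij}$.

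The crux is then to combine these to trap the linear term. Substituting $p_{ij}^2\le d_{ij}$ into the bound constraint yields $p_{ij}^2-\paren{l_{ij}+u_{ij}}p_{ij}+l_{ij}u_{ij}\le 0$, i.e. $\paren{p_{ij}-l_{ij}}\paren{p_{ij}-u_{ij}}\le 0$, so $l_{ij}\le p_{ij}\le u_{ij}$. Feeding this back into the bound constraint, whose right-hand side is affine in $p_{ij}$ and hence maximized at an endpoint of $\sqbra{l_{ij},u_{ij}}$, gives $0\le d_{ij}\le\max\bra{l_{ij}^2,u_{ij}^2}$. Because $l_{ij}$ and $u_{ij}$ are fixed finite data, this is a uniform bound over all feasible $\bm{P}$; summing over all $(i,j)$ bounds $\tr{\bm{P}}$ and therefore $\norm{\bm{P}}_F$, which proves the feasible region is bounded.

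The point to be careful about is that this bound must be available at \emph{every} layer, including the last one $i=L$: the lower-bound constraints on $\bm{P}\sqbra{\bm{x}_{i+1}}$ never cap $d_{ij}$ from above, so one genuinely needs the quadratic bound constraint for each $i\in\bra{0,\dots,L}$ (with the input bounds $\bm{l}_0,\bm{u}_0$ encoding the perturbation ball). Indeed, absent a last-layer bound one could add $t\,\bm{e}_k\bm{e}_k^\top$ for a coordinate of $\bm{x}_L$ to inflate a diagonal entry arbitrarily while keeping $\bm{P}\succeq\bm{O}$ and all remaining constraints intact, so I would state explicitly that the bound constraints apply to all layers. The rest—the eigenvalue/trace estimate and the endpoint maximization—is routine, and the only real idea is this feedback loop in which the minor inequality and the bound constraint jointly confine $p_{ij}$ and then $d_{ij}$.
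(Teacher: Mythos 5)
Your proof is correct, and it reaches the same destination as the paper's by a noticeably different route. The paper argues at the level of whole layer blocks: it sums the bound constraint \eqref{eq:sdp_relax_bound} over the coordinates of layer $i$, uses the PSD minor involving the $\bm P\sqbra{1}=1$ entry together with Cauchy--Schwarz to get $\paren{\bm l_i+\bm u_i}^\top\bm P\sqbra{\bm x_i}\le\norm{\bm l_i+\bm u_i}_2\sqrt{\tr{\bm P\sqbra{\bm x_i\bm x_i^\top}}}$, and then solves the resulting quadratic inequality in $\sqrt{\tr{\bm P\sqbra{\bm x_i\bm x_i^\top}}}$. You instead work entry by entry: the $2\times 2$ minor gives $p_{ij}^2\le d_{ij}$, substituting into the scalar bound constraint factors as $\paren{p_{ij}-l_{ij}}\paren{p_{ij}-u_{ij}}\le 0$, and the endpoint maximization then yields $d_{ij}\le\max\bra{l_{ij}^2,u_{ij}^2}$. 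Your version buys two things the paper's does not: the intermediate containment $l_{ij}\le p_{ij}\le u_{ij}$ comes out for free, and the algebra is cleaner (the paper's stated constant $\paren{\norm{\bm u_i}^2+\norm{\bm l_i}^2}/2$ actually drops a nonnegative cross term from the larger root of its quadratic, though this does not affect boundedness). Your closing caveat is also well taken and is something the paper's proof glosses over: with the paper's convention $\sqbra{L}=\bra{0,\dots,L-1}$, the bound constraint as written in Problem A does not cover the final block $\bm P\sqbra{\bm x_L\bm x_L^\top}$, yet the paper sums only over $i\in\sqbra{L}$ while claiming a bound on all of $\tr{\bm P}$; as your $t\,\bm e_k\bm e_k^\top$ perturbation shows, the feasible region genuinely fails to be bounded unless the bound constraint is read as ranging over $i\in[L+1]$ as in \eqref{eq:qcqp}. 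Making that indexing explicit, as you do, is the right call.
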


\begin{proof}
As in the proof of \Cref{lem:trace}, for each $i\in \sqbra{L}$, we have
\begin{equation*}
  \tr{\bm P\sqbra{\bm x_i \bm x_i^\top}} \le \norm{\bm l_i+\bm u_i}_2\sqrt{\tr{P\sqbra{\bm x_i \bm x_i^\top}}} - \bm l_i^\top \bm u_i, 
\end{equation*}
which implies
\begin{equation*}
  \tr{\bm P\sqbra{\bm x_i \bm x_i^\top}} \le \frac{
    \norm{\bm u_i}^2+\norm{\bm l_i}^2
  }{2}
  \quad (i\in \sqbra{L}).
\end{equation*}
Thus, for any feasible solution $\bm P\succeq \bm O$ to Problem A, we have
\begin{align*}
  \lambda_{\max}\paren{\bm P} &\le \tr{\bm P}  \le \frac{1}{2}\sum_{i\in \sqbra{L}} \paren{\norm{ \bm u_i}^2+\norm{\bm l_i}^2}
                              , 
\end{align*}
which ensures that the feasible region of Problem A is bounded.
\end{proof}

\begin{proposition} \label{prop:pro_b}
The feasible region of Problem B is bounded.
\end{proposition}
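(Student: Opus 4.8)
The plan is to reuse the trace-bound propagation argument behind \Cref{lem:trace} almost verbatim, since Problem B retains precisely the constraints on which that argument relies. First I would observe that the inductive proof of \Cref{lem:trace} uses only three ingredients: the input-layer bound constraint~\eqref{eq:sdp_relax_bound} for $i=0$ (the base case), the ReLU equality constraint~\eqref{eq:sdp_relax_relu} together with the positive semidefiniteness of the relevant $3\times 3$ principal submatrix (the inductive step), and the global constraint $\bm P\succeq \bm O$. Crucially, the intermediate-layer bounds~\eqref{eq:sdp_relax_bound} for $i\ge 1$ are never invoked. Problem B keeps exactly the input-layer bound, the ReLU equality constraint for every layer, and $\bm P\succeq\bm O$; its remaining linear inequalities only shrink the feasible set. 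Hence the feasible region of Problem B is contained in the set cut out by just those three ingredients, on which the entire induction carries over unchanged, yielding $\tr{\bm P\sqbra{\bm x_i \bm x_i^\top}}\le T_i$ for every layer index $i$ and every feasible $\bm P$.

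Given these per-layer trace bounds, I would then bound the trace of the whole matrix. Using $\bm P\sqbra{1}=1$ and summing over the diagonal blocks gives $\tr{\bm P}=1+\sum_{i}\tr{\bm P\sqbra{\bm x_i \bm x_i^\top}}\le 1+\sum_i T_i$, a finite constant depending only on $\bar{\bm x}$, $\rho$, and the weight norms $\norm{\widetilde{\bm W}_i}_F$. Because $\bm P\succeq \bm O$, all its eigenvalues are nonnegative, so $\lambda_{\max}\paren{\bm P}\le \tr{\bm P}$ and therefore $\norm{\bm P}_F\le \tr{\bm P}\le 1+\sum_i T_i<\infty$. This uniform Frobenius-norm bound over all feasible $\bm P$ establishes that the feasible region of Problem B is bounded.

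The only point requiring care—and the main obstacle—is to confirm that the inductive step of \Cref{lem:trace} genuinely does not use the intermediate-layer bounds. I would verify this by tracing through that step: the induction hypothesis supplies $\lambda_{\max}\paren{\bm P\sqbra{\bm x_i\bm x_i^\top}}\le T_i$, which lets me dominate the top-left $2\times 2$ block of the $3\times 3$ submatrix by $\paren{1+T_i}\bm I$; the Schur complement then produces a lower bound on $\tr{\bm P\sqbra{\bm x_{i+1}\bm x_{i+1}^\top}}$, while the ReLU equality constraint produces the matching upper bound $\tr{\widetilde{\bm W}_i\widetilde{\bm P}_{i+1}}\le \norm{\widetilde{\bm W}_i}_F\norm{\widetilde{\bm P}_{i+1}}_F$. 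Combining these two bounds gives $T_{i+1}$ with no reference to $\bm l_i$ or $\bm u_i$ for $i\ge 1$. Once this is confirmed, the boundedness conclusion follows immediately, and indeed the argument shows the feasible region of Problem B lies inside a ball of radius $1+\sum_i T_i$ in Frobenius norm.
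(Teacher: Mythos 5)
Your proof is correct and follows essentially the same route as the paper: observe that the induction in \Cref{lem:trace} never invokes $\bm l_i,\bm u_i$ for $i\ge 1$, so the trace bounds $\tr{\bm P\sqbra{\bm x_i\bm x_i^\top}}\le T_i$ carry over to Problem B, and then bound $\lambda_{\max}\paren{\bm P}$ by $\tr{\bm P}$. Your version is in fact slightly more careful than the paper's in accounting for the $\bm P\sqbra{1}=1$ entry in the total trace, but the argument is the same.
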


\begin{proof}
Since the proof of \Cref{lem:trace} only relies on $\bm l_0$ and $\bm u_0$, and
 $\bm l_i$ and $\bm u_i$ for $i\ge 1$ are not used, \Cref{eq:trace_bound} also holds for feasible solutions to Problem B.
Thus, we have that any feasible solution $\bm P\succeq \bm O$ to Problem B satisfies 
\begin{align*}
  \lambda_{\max}\paren{\bm P} &\le \tr{\bm P}  \le \sum_{i\in \sqbra{L}} T_i,
\end{align*}
which completes the proof.
\end{proof}

We check the strict feasibility of the instances of Problems A and B 
by solving the associated problems~\eqref{eq:sdp_eign}. 
We use the same experimental setup as in~\Cref{sec:empirical_analysis}. 
\Cref{table:eigenvalue_results} lists the average of the optimal values across the solved instances (\emph{Avg. Obj.}). As shown, for Problem A, the optimal values were always small regardless of the number of layers. 
For Problem B, while the average of their optimal values tended to decrease as the number of layers increased, their values were larger than those of Problem A, especially for $L=8, 10$.
The results suggest that when a primal-dual gap occurs, relaxation of the upper bound $\bm u_i$ or relaxation of the ReLU equality constraint~\eqref{eq:sdp_relax_relu} may be effective.

\section{Validation on Networks from Prior Works}
\label{sec:appendix_large_network}

We conduct experiments to demonstrate that the interior-point vanishing problem also occurs in DNN models commonly used in prior neural network verification research.

\noindent\textbf{Experimental Setup:}
We evaluated four networks from prior works~\cite{salman2019convex, chiu2023tight}, ranging from 2-layer to 9-layer architectures with 100 neurons per layer. These networks include models trained with different robustification techniques: LPD-MNIST trained using dual formulation training, NOR-MNIST with standard training without robustification, ADV-MNIST as an adversarially trained network, and mnist\_MLP\_9\_100\_ADV as a deep adversarially trained network with 9 layers.  We applied the same experimental methodology as described in~\Cref{sec:empirical_analysis}, solving the strict feasibility verification problem~\eqref{eq:sdp_relax} for each model. Due to computational constraints with larger networks, we employed the standard SDPA solver rather than SDPA-GMP (multi-precision arithmetic). For each network, we solved a single verification instance to assess the presence of interior-point vanishing.

\noindent\textbf{Results:}
The results in~\Cref{tab:prior_work_results} demonstrate that interior-point vanishing occurs across networks used in prior verification research. All tested networks exhibit minimum eigenvalues at or near zero, indicating loss of strict feasibility regardless of the training methodology employed. This universal occurrence suggests that the phenomenon affects networks trained with different robustification techniques, including dual formulation and adversarial training, as well as standard training, indicating it is an inherent property of the SDP relaxation rather than a training artifact.
The interior-point vanishing problem becomes more severe with increased network depth, as evidenced by the 9-layer network showing a more negative minimum eigenvalue ($-1.80 \times 10^{-4}$) compared to the 2-layer networks. This observation confirms our theoretical analysis that deeper networks exacerbate interior-point vanishing.

\begin{table}[h]
\centering
\caption{Interior-point vanishing in networks from prior verification works}
\label{tab:prior_work_results}
\begin{tabular}{lcccc}
\toprule
\textbf{Model} & \textbf{\# Layers} & \textbf{\# Neurons/Layer} & \textbf{Training Method} & \textbf{Objective} \\
\midrule
LPD-MNIST~\cite{salman2019convex, chiu2023tight} & 2 & 100 & Dual formulation & $1.14 \times 10^{-4}$ \\
NOR-MNIST~\cite{salman2019convex, chiu2023tight} & 2 & 100 & - & $5.29 \times 10^{-5}$ \\
ADV-MNIST~\cite{salman2019convex, chiu2023tight} & 2 & 100 & Adversarial Training & $-1.26 \times 10^{-5}$ \\
mnist\_MLP\_9\_100\_ADV~\cite{salman2019convex} & 9 & 100 & Adversarial Training & $-1.80 \times 10^{-4}$ \\
\bottomrule
\end{tabular}
\end{table}

\end{document}